\documentclass[10pt, twoside]{article}
%\usepackage{setspace}
%\onehalfspacing
\usepackage[margin=1in]{geometry}

\usepackage{subfig}

\pdfoutput=1

\usepackage{amsfonts,amsmath,amssymb,amsthm,bm,bbm}

\usepackage{algorithm,algpseudocode}
\usepackage{dsfont,array}

\usepackage{newclude}

\usepackage{appendix}

%\usepackage[inline]{enumitem}
%\usepackage[margin=1in]{geometry}
%\usepackage{bm,bbm}
%\usepackage{algpseudocode}
%\usepackage[colorlinks,
%linkcolor=blue,
%citecolor=blue,
%urlcolor=magenta]{hyperref}
\usepackage{hyperref}

\usepackage{multirow}
\usepackage{multicol}
\usepackage{xspace}
\usepackage{mathtools}
\usepackage{placeins}
\usepackage{authblk}

\newtheorem{definition}{Definition}
\newtheorem{assumption}{Assumption}
\newtheorem{theorem}{Theorem}
\newtheorem{lemma}{Lemma}
\newtheorem{corollary}{Corollary}

\newtheorem{remark}{Remark}

\newtheorem{condition}{Condition}

 \newcommand{\ignore}[1]{}

\DeclarePairedDelimiter\floor{\lfloor}{\rfloor}
\DeclareMathOperator*{\argmax}{arg\,max}
\newcommand\numberthis{\addtocounter{equation}{1}\tag{\theequation}}

\newcommand{\Xcal}{\mathcal{X}}

\DeclarePairedDelimiterX\Basics[1](){ #1}

\usepackage{centernot}

\newcounter{const-no}

\def\EE{{\mathbb{E}}}\def\PP{{\mathbb{P}}}

%########################################END MY PACKAGES######################################

% The \author macro works with any number of authors. There are two
% commands used to separate the names and addresses of multiple
% authors: \And and \AND.
%
% Using \And between authors leaves it to LaTeX to determine where to
% break the lines. Using \AND forces a line break at that point. So,
% if LaTeX puts 3 of 4 authors names on the first line, and the last
% on the second line, try using \AND instead of \And before the third
% author name.

%%%%%%%%%%%%labeleing#########
\makeatletter
\newcommand{\setword}[2]{%
	\phantomsection
	#1\def\@currentlabel{\unexpanded{#1}}\label{#2}%
}
\makeatother

\date{\today}
\title{Noisy Blackbox Optimization with Multi-Fidelity Queries: A Tree Search Approach}

\author[1]{Rajat Sen}
\author[2]{Kirthevasan Kandasamy}
\author[1]{Sanjay Shakkottai}
%\author[1]{Alexandros G. Dimakis}
%\author[1]{Sanjay Shakkottai}
\affil[1]{The University of Texas at Austin}
\affil[2]{Carnegie Mellon University}
%\affil[3]{IBM Research, Thomas J. Watson Center}

%%%%%%%%%%%%%%%%%%%%%%%%%%%%%%%%%%%%%%
% If your paper is accepted, change the options for the package
% aistats2018 as follows:
%
%\usepackage[accepted]{aistats2018}
%
% This option will print headings for the title of your paper and
% headings for the authors names, plus a copyright note at the end of
% the first column of the first page.

\usepackage[parfill]{parskip}

\begin{document}
% \nipsfinalcopy is no longer used

\maketitle

\begin{abstract}
We study the problem of black-box optimization of a noisy function in the presence of low-cost approximations or fidelities, which is motivated by problems like hyper-parameter tuning. In hyper-parameter tuning evaluating the black-box function at a point involves training a learning algorithm on a large data-set at a particular hyper-parameter and evaluating the validation error. Even a single such evaluation can be prohibitively expensive. Therefore, it is beneficial to use low-cost approximations, like training the learning algorithm on a sub-sampled version of the whole data-set. These low-cost approximations/fidelities can however provide a \textit{biased} and \textit{noisy} estimate of the function value. In this work, we incorporate the multi-fidelity setup in the powerful framework of \textit{noisy} black-box optimization through \textit{tree-like} hierarchical partitions. We propose a multi-fidelity bandit based tree-search algorithm for the problem and provide simple regret bounds for our algorithm. Finally, we validate the performance of our algorithm on real and synthetic datasets, where it outperforms several benchmarks. 
\end{abstract}

\section{Introduction}
\label{sec:intro}
Several important problems in the fields of physical simulations~\cite{martinez07robotplanning}, industrial engineering~\cite{parkinson06wmap3} and model selection~\cite{snoek12practicalBO} in machine learning can be cast as sequential optimization of a function $f(.)$ over a domain $\mathcal{X}$, with black-box access.
%
%% We study sequential optimization of a function $f(.)$ over a domain $\mathcal{X}$, with black-box access.
%
A black-box optimization algorithm evaluates the function at a set of sequentially chosen points $x_1,...,x_n$ obtaining the function values $f(x_1),...,f(x_n)$ in that order, and outputs a point $\hat{x}(n)$ at the end of the sequence. The performance of such algorithms are commonly evaluated in terms of \textit{simple regret} defined as $S(n) = \sup_{x \in \mathcal{X}}f(x) - f(\hat{x}(n))$.

As the driving example, we consider the problem of tuning hyper-parameters of machine learning algorithms over large datasets. Here, the space of hyper-parameters constitutes the domain $\mathcal{X}$, while the function $f(x)$ represents the validation error after training the machine learning algorithm with the hyper-parameter setting $x \in \Xcal$. Even a single evaluation of the function can be expensive and therefore conventional methods of black-box optimizations are infeasible (e.g. training a deep network over a large data-set could take days). This motivates our setting where access to lower-cost but biased estimates of the function is assumed~\cite{kandasamy2016multi,huang06mfKriging,cutler14mfsim} through a multi-fidelity query framework. In the context of hyper-parameter tuning, a possible low-cost approximation/fidelity can be training and validating the machine learning algorithm over a much smaller sub-sampled version of the data-set. However, the resulting validation error can be a \textit{biased} and \textit{noisy} estimate of the validation error on the whole dataset, and the bias depends on the size of the sub-sampled dataset used.

A well-studied approach for such problems is through bayesian optimization formulations~\cite{kandasamy2017multi,kandasamy2016mfgpucb,huang2006sequential,klein2016fast}. In our work, instead, we approach this problem using \textit{tree-search} methods that have received much recent attention \cite{kocsis2006bandit,kleinberg2008multi,bubeck2011x,munos2011optimistic,valko2013stochastic,sen2018multi}. 
%
% The multi-fidelity setup for black-box function optimization has been popularly studied in the bayesian optimization setting~\cite{kandasamy2017multi,kandasamy2016mfgpucb,huang2006sequential,klein2016fast}. However, in this paper we focus on another powerful framework for sequential black-box optimization that works with \textit{tree-like} hierarchical partitioning of the function domain $\mathcal{X}$. These \textit{tree-search} based methods have their roots in the empirically successful heuristic UCT~\cite{kocsis2006bandit}, which has lead to several theoretically founded algorithms~\cite{kleinberg2008multi,bubeck2011x,munos2011optimistic,valko2013stochastic,sen2018multi}. 
%
%In this work, we incorporate low-cost approximations or \textit{fidelities} with \textit{tree-search} based methods for noisy black-box optimization. We assume access to a binary tree-like partitioning of the domain $\mathcal{X}$ similar to~\cite{bubeck2011x,munos2011optimistic,grill2015black}.
%
Our study combines an exploration of the state-space using hierarchical binary partitions (a binary tree, with nodes representing subsets of the function domain $\mathcal{X}$ \cite{bubeck2011x,munos2011optimistic,grill2015black}), and queries at these nodes at different fidelities (represented through a continuous parameter over $[0, 1]$), that correspond to lower-cost evaluations of the function, but with both bias and noise.

The \textbf{main contributions} of this paper are as follows:

{\bf \it (i)} We model multiple fidelities in the framework of black-box optimization of a noisy function, with hierarchical partitions in Section~\ref{sec:psetting}. We demonstrate that bandit-based algorithms using hierarchical partitions can be naturally adapted to effectively use a continuous range of low-cost approximations. 

{\bf \it (ii)} We propose MFHOO~(Algorithm~\ref{alg:mfhoo} in Section~\ref{sec:algo}) which is a natural adaptation of HOO~\cite{bubeck2011x} to the multi-fidelity setup. We analyze our algorithm under smoothness assumptions on the function and the partitioning, which are similar to the assumptions adopted in~\cite{grill2015black}. We show that our simple regret guarantees can be much stronger than that of HOO~\cite{bubeck2011x} (which operates only at the highest fidelity) under some natural conditions on the bias and cost functions. Our simple regret guarantees are presented in Theorem~\ref{thm:mfhoo} and Corollary~\ref{cor:geom}. MFHOO however needs the optimal smoothness parameters $(\nu^*,\rho^*)$ as inputs (similar to HOO~\cite{bubeck2011x}, see Section~\ref{sec:psetting} for details). In Section~\ref{sec:algo}, we propose a second algorithm MFPOO (Algorithm~\ref{algo:robust_algo}) that can achieve a simple regret bound close to that of MFHOO, even without access to the parameters $(\nu^*,\rho^*)$. MFPOO is inspired by the recent techniques proposed in~\cite{grill2015black}. Theorem~\ref{thm:mfpoo} provides simple regret guarantees for MFPOO. 

{\bf \it (ii)} Finally, in Section~\ref{sec:sims} we empirically compare the performance of our algorithm on real and synthetic data, against state of the art algorithms for black-box optimization. First, we perform simulated experiments on well-known benchmark functions. Then, we compare our algorithm to others in the context of  hyper-parameter tuning for regression and classification tasks. 
\section{Related Work}
\label{sec:rwork}
This work builds on a rich literature on black-box optimization using hierarchical partitions~\cite{munos2011optimistic,valko2013stochastic,kleinberg2008multi,bubeck2011x}, which in turn build on optimistic algorithms in a bandit setting \cite{auer2002finite}.
% These works are based on the principle of optimism which is popular in multi-armed bandit problems~\cite{auer2002finite}.
Our work is closely related to~\cite{bubeck2011x} which introduces HOO, a tree-search based algorithm for noisy black-box optimization. The regret guarantees in~\cite{bubeck2011x} are provided under some local Lipschitz assumption on the function with respect to a semi-metric and also some assumptions on the diameter of the nodes as a function of height in the tree. Later these assumptions were combined into a single combinatorial assumption in~\cite{grill2015black}. We follow similar assumptions. In a recent related work~\cite{sen2018multi}, the multi-fidelity tree-search problem has been studied in a setting where the evaluations of the black-box function are \textit{not noisy}. In contrast we study the problem of multi-fidelity black box optimization in the presence of noise, which makes the setting much more challenging. In particular in the deterministic setting, one can simply descend through the tree without back-tracking. However in our setting, back-tracking in a tree occurs as time progresses because additional samples improves estimates at various nodes in the tree, thus introducing bandit explore-exploit trade-offs, and hence can change their relative ordering over time.
% however in a noisy tree-search back-tracking is required since more averaging with time can correct prior mistakes due to noise.
This distinction leads to a considerably different algorithm and analysis in our paper as compared to~\cite{sen2018multi}. 

Multi-fidelity optimization is also relevant in several application~\cite{forrester07cokriging,lam2015multifidelity,huang06mfKriging,poloczek2016multi,klein2016fast}, but theoretical guarantees are generally lacking in these studies. Recently, the multi-fidelity setting has been theoretically studied in online problems~\cite{zhang15weakAndStrong,agarwal2011oracle,sabharwal2015selecting,kandasamy2016multi}. In some recent works~\cite{kandasamy2017multi,kandasamy2016gaussian,kandasamy2016mfgpucb}, UCB like algorithms with Bayesian Gaussian process assumptions on $f$ have been analyzed in a multi-fidelity black-box optimization setting.

Also relevant to this work are the bandit based techniques for hyper-parameter optimization such as~\cite{li2016hyperband,jamieson2016non}. However, these methods rely on iterative loss sequences and therefore are not directly applicable to the general multi-fidelity setup.
\section{Problem Setting}
\label{sec:psetting}
The problem setting in this work is that of optimizing a function $f : \mathcal{X} \rightarrow \mathbb{R}$ with \textit{noisy and biased}  black-box access. Given a finite cost budget $\Lambda,$ and access to fidelity-dependent queries (with higher cost for higher fidelity and lower bias), we need to determine $\hat{x} \in  \mathcal{X}$ such that $|\sup_{x\in\Xcal} f(x) - f(\hat{x})|$ is small.
%
% The task is to locate a point $x \in \mathcal{X}$ such that $f(x)$ is as close as possible to $\sup_{x\in\Xcal} f(x)$, given a finite budget for performing evaluations.
%
A similar problem setting has been considered in a recent work~\cite{sen2018multi}, however there the function evaluations at different fidelities are \textit{not noisy}, that is the cheap approximations only add \textit{bias} to the function values. In this work, we consider a setting where a function evaluation at a cheaper fidelity incurs both bias and sub-Gaussian noise around the function value. This makes the problem setting different and significantly more challenging. 

\subsection{Details about Function Evaluations}

We build on the notation in \cite{sen2018multi}, but modify it to permit noisy evaluations. The fidelity is modeled as a continuous parameter $z \in \mathcal{Z} \triangleq [0,1].$ An evaluation of a function consists of two inputs $(x,z)$ corresponding to the query point $x \in \mathcal{X}$ and fidelity $z \in \mathcal{Z},$ and results in a random variable $Y$ as the output, with $Y = f_z(x) + \epsilon.$
% In our problem setup, the function can be queried at a continuous range of fidelities in $\mathcal{Z} \triangleq [0,1]$. When the function is queried at a point $x \in \mathcal{X}$ with fidelity $z \in \mathcal{Z}$, a random variable $Y$ = $f_z(x) + \epsilon$ is revealed.
Here, $\epsilon$ is a sub-Gaussian noise random variable~\cite{buldygin1980sub} with parameter $\sigma^2$ i.e $\EE[\epsilon] = 0$ and $\EE[ \exp(s\epsilon)] \leq \exp (s^2\sigma^2/2)$ all $s \geq 0$. Such random variables will be denoted as $\mathrm{subG}(\sigma)$. 

In our model, the mean of the query, $ f_z(x),$ is biased and progressively smaller bias can be obtained but with higher costs. This is formalized through the monotone decreasing \textit{bias function} (assumed to be known in our analysis\footnote{Note that the knowledge of the bias function is only required for our theoretical analysis, similar to prior works~\cite{kandasamy2017multi,kandasamy2016gaussian}. In practice, we can assume a simple parametric form of the bias function which can be updated online. We provide more details in Section~\ref{sec:sims}.}) $\zeta: \mathcal{Z} \rightarrow \mathbb{R}^+$ and a monotone increasing \textit{cost function} $\lambda: \mathcal{Z} \rightarrow \mathbb{R}^+,$ with  $\lambda(z) \geq 1.$ A query at $(x, z)$ results in an output mean with $\lvert f_z(x) - f(x) \rvert \leq \zeta(z)$ and at cost $\lambda(z).$ We assume that at the highest fidelity ($z = 1$), there is no bias, i.e.,  $f_1(x) = f(x).$ Finally, we assume that the global optimum is unique, i.e. $\exists x^* = \argmax_{x \in \mathcal{X}} f(x).$ 

Our problem setting is a natural model for the driving example of hyper-parameter tuning. For instance, the domain $\mathcal{X}$ represents the range of hyper-parameters. In the case of deep networks this may include kernel sizes and number of channels in different layers (rounded off to the nearest integers), learning rate of optimizers, dropout level and weight decay levels etc. While working with large datasets, training and validating on sub-sampled versions of the datasets serve as cheap approximations which can be modeled by our continuous range of fidelities $\mathcal{Z}$. For instance on Cifar10~\cite{krizhevsky2009learning}, using the whole training set with $N_s > 50k$ samples corresponds to $z=1$ while $z=0$ can correspond to using $n_s \sim 1000$ samples. The range [1k,50k] can be linearly mapped to $[0,1]$ (rounding off to the nearest interger number of samples). It is also clear that function evaluations obtained at the cheaper fidelities are both biased (validation accuracy is lower while using lesser number of samples) and noisy (randomness in sub-sampling and SGD). This falls under the purview of our evaluation model. The cost function simply corresponds to the time required to train a model over a sub-sampled version of the data-set and it increases with the number of samples (fidelity).

Putting all the pieces together, the problem setting is a sequential process as follows: Given a cost budget $\Lambda$ at each time-slot $t = 1,2,..$, $(i)$ Select a point $X_t$ and evaluate it at a fidelity $Z_t$, $(ii)$ Observe a noisy feedback $Y_t$, $(iii)$ Incur a cost $\lambda(Z_t).$ This process in continued till the cost budget is exhausted. The goal is to find a point $x \in \mathcal{X}$ such that $f(x)$ is as close as possible to $f^* \triangleq f(x^*)$. We assume that $x^*$ is a unique point in $\mathcal{X}$ where the supremum is achieved. The performance of a policy/optimization algorithm shall be characterized by \textit{simple regret}. 

\subsection{Definition of Regret}
 Let $X_1,X_2,....$ be the points queried by an algorithm/policy at fidelities $Z_1,Z_2,...$ respectively. Let $\hat{X}_{\Lambda}$ be the point returned by the  policy after $N(\Lambda)$ queries, where $N(\Lambda)$ is a random quantity such that $N(\Lambda) = \max\{n \geq 1: \sum_{i=1}^{n} \lambda(Z_i) \leq \Lambda\}$ and $\Lambda$ is the total cost budget. Then the \textit{simple regret} is given by $S(\Lambda) = f^* - \EE[f(\hat{X}_{\Lambda})]$. Here, the expectation is over the randomness in the observations and the policy. Note that the regret is measured only at the highest fidelity that is $z = 1$, as we are interested in optimizing the original function at $z=1$. A similar definition of regret has been used before in the multi-fidelity literature~\cite{kandasamy2016multi,kandasamy2017multi}. 
In the course of our analysis, we are also interested in the \textit{cumulative regret} as an intermediate quantity. Given that a policy performs $n$ evaluations or queries, the cumulative regret is given by $R_n = \EE \left[ \sum_{t = 1}^{n} [f^* - f(X_t)]\right].$
  
The problem of black-box optimization cannot be solved efficiently without assuming any structure or regularity assumptions on the function being evaluated. In the next sub-section, we assume access to a hierarchical partitioning of the domain $\mathcal{X}$ and impose some regularity assumption jointly on the function and the hierarchical partitions. Similar assumptions have been used in the theoretical analysis of several prior works~\cite{munos2011optimistic,grill2015black,bubeck2011x,kleinberg2008multi}, that work with hierarchical partitioning of the domain. 

\subsection{Tree-like Partitions and Structural Assumptions}
\label{sec:smooth_assume}
We first define the tree-like hierarchical partitions of the domain $\mathcal{X}$ and then provide some technical assumptions that impose regularity conditions jointly on the function and the partitions.   

{\bf Hierarchical Partitions: } We assume that the domain $\mathcal{X}$ is partitioned hierarchically according to an infinite binary tree. We denote this partitioning as $\mathcal{P} = \{(h,i)\}$, where $h$ is a depth parameter and $i$ is an index. For any depth $h \geq 0$, the cells $\{(h,i)\}_{1 \leq i \leq I_h}$ denote a partitioning of the space $\mathcal{X}$. Note that we use the notation $(h,i)$ to denote both the index of a cell/node and the part of the domain it represents. For example, $x \in (h,i)$ refers to a point $x$ within the part of the domain represented by the cell indexed at $(h,i)$. At depth $0$ there is a single cell $(0,1) = \mathcal{X}$.  A cell $(h,i)$ can be split into two child nodes at depth $h+1$, which are indexed as $(h+1,2i-1)$ and $(h+1,2i)$. A cell $(h,i)$ is said to be \textit{queried}, when a fixed representative point $x_{h,i} \in (h,i)$ (ideally \textit{centrally} located) is evaluated at any fidelity. Let $\mathcal{C}(h,i)$ denote all the descendant cells of $(h,i)$ in the infinite tree. The unique cell at height $h$ that contains the optima $x^*$ is indexed as $(h,i^*)$. For all sub-optimal cells $(h,i)$ (those that do not contain $x^*$) the sub-optimality gap is denoted by $\Delta_{h,i} = f^* - \sup_{x \in (h,i)}f(x)$. In other words, a sub-optimal cell $(h,i)$ is $\Delta_{h,i}$-optimal. Let $\mathcal{X}_{\epsilon} = \{x \in \mathcal{X}: f(x) \geq f^* - \epsilon\}$ for all $\epsilon > 0$.

%We assume access to a \textit{tree-like} hierarchical partitioning $\mathcal{P} = \{(h,i)\}$ of the domain $\mathcal{X}$, where $h$ denotes a depth parameter and $i$ is the index. For any depth $h \geq 0$, the cells $\{(h,i)\}_{1 \leq i \leq I_h}$ denote a partitioning of the space $\mathcal{X}$. Note that we use the notation $(h,i)$ to denote both the index of a cell/node and the part of the domain it represents. For example, $x \in (h,i)$ refers to a point $x$ within the part of the domain represented by the cell indexed at $(h,i)$. At depth $0$ there is a single cell $(0,1) = \mathcal{X}$. A cell $(h,i)$ can be split into two child nodes at depth $h+1$, which are indexed as $(h,2i-1)$ and $(h,2i)$. In what follows, querying a cell $(h,i)$ would refer to evaluating the function at a fixed representative point $x_{h,i} \in (h,i)$ (ideally centrally located in the cell) at a chosen fidelity. Let $\mathcal{C}(h,i)$ denote all the descendant cells of $(h,i)$ in the infinite tree. The unique cell at height $h$ that contains the optima $x^*$ is indexed as $(h,i^*)$. For all sub-optimal cells $(h,i)$ (those that do not contain $x^*$) the sub-optimality gap is denoted by $\Delta_{h,i} = f^* - \sup_{x \in (h,i)}f(x)$. In other words, a sub-optimal cell $(h,i)$ is $\Delta_{h,i}$-optimal. Let $\mathcal{X}_{\epsilon} = \{x \in \mathcal{X}: f(x) \geq f^* - \epsilon\}$ for all $\epsilon > 0$.  

For instance, consider the domain $\mathcal{X} = [0,1]\times[0,1] \subset \mathbb{R}^2$. In this example, we only consider cells that intervals $\mathcal{C} = \{x \triangleq [x_1,x_2] \in \mathcal{X}: b_{1,l} < x_1 \leq b_{1,u}, b_{2,l} < x_2 \leq b_{2,u} \} \triangleq [[b_{1,l},b_{1,u}],[b_{2,l},b_{2,u}]]$. In this case, the hierarchical partitioning has a root $(0,1) = [[0,1],[0,1]]$ (the entire domain). At $h=1$, the two children of the root resulting form a split along $x_1$ would be $(1,1) = [[0,0.5],[0,1]] $ and $(1,2) = [[0.5,1],[0,1]] $. The cell $(1,1)$ can be split into two children along the $x_2$ coordinate to result in $[[0.5,1],[0,0.5]]$ and $[[0.5,1],[0.5,1]],$ with the coordinate-wise midpoint of the cell used as the representative point $x_{h,i}$.
% A natural choice of the representative point $x_{h,i}$ of a cell $(h,i)$ is the point which is the coordinate wise mid-point in each cell.
In our example, $x_{1,1}$ can be chosen as the point $[0.25,0.5]$.  

We impose the following joint assumptions on the hierarchical partition $\mathcal{P}$ and the black-box function $f$, similar to the recent works~\cite{grill2015black,sen2018multi}.  

%\begin{assumption}[Smoothness Decay]
%	\label{as:partition}
%	There exists $\nu$ and $\rho \in (0,1)$ such that,
%	\begin{align}
%	\forall h \geq 0, \forall x \in (h,i_h^*), ~~~~~~ f(x) \geq f(x^*) - \nu\rho^h,
%	\end{align} 
%\end{assumption}
%where $(h,i_h^*)$ is the unique partition of height $h$ which contains $x^*$. 

%Next, we have an assumption regarding the diameters of the cells at a particular height. 

%\begin{assumption}[Cell Diameter]
%	\label{as:partition}
%	There exists $\nu$ and $\rho \in (0,1)$ such that for all $(h,i)$,
%	\begin{align}
%	\lvert f(x_1) - f(x_2) \rvert \leq \nu \rho^h, \text{ for all } x_1,x_2 \in (h,i). 
%	\end{align} 
%\end{assumption}

\begin{assumption}
	\label{as:partition}
	There exists $\nu$ and $\rho \in (0,1)$ such that for all cells $(h,i)$ such that $\Delta_{h,i} \leq c\nu\rho^h$ (for a constant $c \geq 0$) we have that,
$f(x) \geq f^* - \max\{2c,c+1\} \nu \rho^h, \text{ for all } x \in (h,i).$

\end{assumption}

% We also adopt the definition of the \textit{near-optimality-dimension} for parameters $(\nu,\rho)$ from~\cite{grill2015black}. This is a quantity that depends on the choice of parameters, the partitioning and the function itself.

Finally, the following definition of \textit{near-optimality-dimension} with parameters $(\nu,\rho)$ is borrowed from~\cite{grill2015black}.

\begin{definition}
	\label{def:dimension}
	The near-optimality dimension of $f$ with respect to parameters $(\nu,\rho)$ is given by,
	\begin{align*}
	d(\nu,\rho) &\triangleq  \inf \left\{ d' \in \mathbb{R}^+ : \exists C(\nu,\rho), \forall h \geq 0, \right. \\
	&\left. ~ \mathcal{N}_{h}(2\nu \rho ^h) \leq  C(\nu,\rho) \rho ^{-d'h} \right\} \numberthis
	\end{align*}
	where $\mathcal{N}_{h}(\epsilon)$ is the number of cells $(h,i)$ such that $\sup_{x \in (h,i)} f(x) \geq f(x^*) - \epsilon$. 
\end{definition} 
We denote the parameters associated with the minimum near optimality dimension $d(\nu,\rho)$ to be  $(\nu^*,\rho^*).$ 
%Let $(\nu^*,\rho^*)$ be the parameters with the minimum near optimality dimension $d(\nu^*,\rho^*)$.
The optimal near-optimality dimension $d(\nu^*,\rho^*)$ controls the hardness of optimizing the function, given access to the particular hierarchical partition.

Our assumptions are closely related to the ones in the seminal paper~\cite{bubeck2011x}. Bubeck et al.~\cite{bubeck2011x} consider a similar noisy tree-search based black-box optimization problem. In their work, it was assumed that there is a dissimilarity metric $\ell(x,y)$ over the domain and the function satisfies a weak-Lipschitz condition around the optima with respect to the dissimilarity. These assumptions have been progressively refined \cite{munos2011optimistic,valko2013stochastic}, with \cite{grill2015black} providing a succinct assumption using the framework of hierarchical partitions.
% More recent works~\cite{munos2011optimistic,valko2013stochastic} have assumed a local-smoothness property with respect to a semi-metric $\ell (x,y)$, given by $\forall x \in \mathcal{X}$, $f(x^*) - f(x) \leq \ell(x,x^*)$. Recently, it was observed in~\cite{grill2015black} that several assumptions about the dissimilarity and the function can be combined into one combinatorial assumption about the function and the hierarchical partition.
As in \cite{sen2018multi}, we adopt this assumption in our paper. Assumption~\ref{as:partition} is a slightly stronger version of Assumption 1 in~\cite{grill2015black} i.e., in~\cite{grill2015black} it has been assumed that Assumption~\ref{as:partition} is satisfied with only $c = 0$. It has been recently observed~\cite{shang2017adaptive} that it is highly non-trivial to prove the regret guarantees of HOO~\cite{bubeck2011x} under the assumptions in~\cite{grill2015black} and this stronger version may be indeed necessary. Assumption~\ref{as:partition} is akin to ensuring that the conditions of Lemma 3 in~\cite{bubeck2011x}  are satisfied. 
\section{Algorithms}
\label{sec:algo}
We first propose MFHOO (Multi-Fidelity Hierarchical Optimistic Optimization) which is a noisy tree-search based multi-fidelity black-box optimization policy that requires the optimal smoothness parameters as input. Then we propose another algorithm MFPOO (Multi-Fidelity Parallel Optimistic Optimization) that can recover regret guarantees similar to that of MFHOO without the exact knowledge of smoothness parameters. 

{\bf When $(\nu^*,\rho^*)$ are known: }  Our first algorithm MFHOO is inspired by the HOO strategy in~\cite{bubeck2011x}.  We essentially show that the tree-search based technique in~\cite{bubeck2011x} can be naturally adapted to a multi-fidelity setting, with some modifications. In certain settings, our algorithm can achieve a much stronger simple regret scaling when compared to HOO which queries only at fidelity $z = 1$. The detailed pseudo-code of the algorithm is provided as Algorithm~\ref{alg:mfhoo}. We first establish some notation specific to our algorithm.  

For any black-box optimization policy, let $X_t$ be the random variable denoting the point queried at time $t$ which is part of the cell $(H_t,I_t)$, while $Z_t$ is the fidelity at which the query is made. Let $Y_t$ be the observation at the corresponding time-step such that $Y_t = f_{Z_t}(X_t) + \epsilon_t$, where $\epsilon_t \sim \mathrm{subG}(\sigma)$. Let $T_{h,i}(t)$ be the number of times nodes in $\mathcal{C}(h,i)$ have been queried i.e,
$T_{h,i}(t) = \sum_{s = 1}^{t} \mathds{1}\{(H_s,I_s) \in \mathcal{C}(h,i) \}$. Let $\mathcal{T}_t$ denote the finite subtree visited by the algorithm at the end of round $t$. The tree is initialized at $\mathcal{T}_0 = \{(0,1)\}$. Now we are at a position to introduce Algorithm~\ref{alg:mfhoo}.

\begin{algorithm}[!ht]                      
	\caption{MFHOO: Multi-Fidelity Hierarchical Optimistic Optimization}          
	\label{alg:mfhoo}                           
	\begin{algorithmic}[1]                    
		\State {\bf Inputs} - Cost Budget: $\Lambda$, Sub-Gaussian Parameter: $\sigma$, Partitioning Structure: $(h,i)$, Bias function: $\zeta(.)$, Cost function: $ \lambda(.)$, Smoothness Parameters: $(\nu,\rho)$. 
		\State {\bf Initialization} - $\mathcal{T} =\{(0,1)\}$, $B_{1,2} = B_{2,2} = \infty$, $C = 0$ and $n=0$.
		\While {$C \leq \Lambda$}
		\State $(h,i) \leftarrow (0,1)$.
		\State $P \leftarrow \{(h,i)\}$.
		\While {$(h,i) \in \mathcal{T}$}
		\State Update $(h,i)$ to $(h+1,2i-1)$ if $B_{h,2i-1} > B_{h,2i}$ or to $(h+1,2i)$ if $B_{h,2i-1} < B_{h,2i}$. 
		\State Ties are broken at random. $P \leftarrow P \cup \{(h,i)\}$. 
		\EndWhile
		\State $(H,I) \leftarrow (h,i)$. Query $x_{H,I}$ at fidelity $z_H$ and receive value $Y$. $\mathcal{T} \leftarrow \mathcal{T} \cup \{(H,I)\}$. 
		\State Let $n = n+1$ and let $x_n \triangleq x_{H,I}$. Update $C = C + \lambda(z_h)$. 
		\For {all $(h,i) \in P$}
		\State $T_{h,i} \leftarrow T_{h,i} + 1$
		\State $\hat{\mu}_{h,i} \leftarrow (1 - 1/T_{h,i})\hat{\mu}_{h,i} + Y/T_{h,i}.$ 
		\EndFor
		\For {all $(h,i) \in P$}
		\State $U_{h,i} \leftarrow \hat{\mu}_{h,i} + \sqrt{2 \sigma^2 \log n / T_{h,i}} + \nu\rho^h + \zeta(z_h).$ 
		\EndFor
		\State $B_{H+1,2I-1} = B_{H+1,2I} = \infty$
		\State Starting from the leaves down to the root maintain: $B_{h,i} \leftarrow \min \{U_{h,i}, \max\{B_{h+1,2i-1},B_{h+1,2i}\}\}$. 
		\EndWhile
		\State Return a point among $x_1,x_2...,x_n$ chosen uniformly at random. 
	\end{algorithmic}
\end{algorithm}

The notable difference from HOO~\cite{bubeck2011x} is that all queries at height $h$ are performed at a fidelity $z_h$ such that $\zeta(z_h) = \nu\rho^h$. The intuition is that in 'near optimal' cells at height $h$, the function values of all points inside a cell are at most $O(\nu\rho^h)$ apart from each other. Therefore, if $x^*$ belongs to a cell $(h,i^*)$ at height $h$, then all points in that cell are $O(\nu\rho^h)$ optimal. Thus in the absence of noise, ideally beyond this point we would only like to expand nodes/cells that are at least $O(\nu\rho^h)$ optimal, which is only possible if the error due to the fidelities is $O(\nu\rho^h)$.

\begin{remark}
	\label{remark:practice}
	Note that in the pseudo-code of Algorithm~\ref{alg:mfhoo}, the final point returned is randomly chosen from the points evaluated in the course of the algorithm. This is sufficient in theoretically bounding the simple regret as in Theorem~\ref{thm:mfhoo}. However, in practice several optimizations can be performed to return the most promising point among the ones evaluated. In our implementation we return a point $\hat{x}_{\Lambda} \in \{x_1,....,x_n \}$ such that $\hat{x}_{\Lambda} = \argmax_{x_i} f_{z_i}(x_i) - \zeta(z_i) + \epsilon_i$. Note that $ f_{z_i}(x_i) - \zeta(z_i)$ is a lower bound on the value $f(x_i)$. We return a point that approximately maximizes this lower-bound.  
\end{remark}

{\bf When $(\nu^*,\rho^*)$ are not known: } Grill. et al.~\cite{grill2015black} have recently developed a technique for searching for the optimal smoothness parameters for HOO~\cite{bubeck2011x}. The technique can be extended to our algorithm MFHOO in the multi-fidelity setup. This leads us to our second algorithm MFPOO (Algorithm~\ref{algo:robust_algo}). 

\begin{algorithm}                      
	\caption{MFPOO: Multi-Fidelity Parallel Optimistic Optimization}          
	\label{algo:robust_algo}                           
	\begin{algorithmic}[1]   
		\State {\bf Arguments: } $(\nu_{max},\rho_{max})$, $\zeta(z)$, $\lambda(z)$, $\Lambda$, $\sigma$                 
		\State Let $N = (1/2)D_{max} \log (\Lambda/\log(\Lambda))$ where $D_{max} = \log 2 / \log (1/\rho_{max})$
		\For {$i = 1$ to $N$}
		\State Spawn MFHOO with parameters $(\nu_{max},\rho_i = \rho_{max}^{N/(N-i-1)})$ with budget $(\Lambda - N\lambda(1))/N$
		\EndFor 
		\State Let $\hat{x}_{\Lambda,i}$ be the point returned by the $i^{th}$ MFHOO instance for $i \in \{0,..,N-1 \}$. Evaluate all $\{\hat{x}_{\Lambda,i}\}_{i}$ at $z = 1$. Return the point $\hat{x}_{\Lambda} = \hat{x}_{\Lambda,i^*}$ where $i^* = \argmax_{i} f(x_{\Lambda,i}) + \epsilon_i$. 
	\end{algorithmic}
\end{algorithm} 

The key idea of the algorithm is to spawn several MFHOO instances with different smoothness parameters $\rho_1,...,\rho_N$. The sequence $\rho_1,...,\rho_N$ is chosen carefully according to the strategy introduced in~\cite{grill2015black}. The budget is uniformly allocated in between all the MFHOO instances spawned. The $i$-th MFHOO instance is spawned with the parameters $(\nu_{max},\rho_i = \rho_{max}^{N/(N-i-1)})$. It is only required that $\rho_{max} \geq \rho^*$ and $\nu_{max} \geq \nu^*$. In Theorem~\ref{thm:mfpoo} we show that at least one of the MFHOO instances spawned by MFPOO has a simple regret guarantee of MFHOO run at the optimal parameters $(\nu^*,\rho^*)$ but with a budget $(\Lambda - N\lambda(1))/N$. We provide more details and intution about this algorithm in Appendix~\ref{sec:mfpoo}.

\section{Theoretical Results}
\label{sec:results}
In this section we provide our main theoretical results: Simple regret bounds for MFHOO (Algorithm~\ref{alg:mfhoo}) and MFPOO (Algorithm~\ref{algo:robust_algo}). First we present Theorem~\ref{thm:mfhoo}, which provides a simple regret bound for Algorithm~\ref{alg:mfhoo}. 

\begin{theorem}
	\label{thm:mfhoo}
	If Algorithm~\ref{alg:mfhoo} is run with parameters $(\nu,\rho)$ that satisfy Assumption~\ref{as:partition} and given a total cost budget $\Lambda$, then the simple regret is bounded as follows,
	\begin{align*}
	S(\Lambda) &= \mathcal{O} \left( C(\nu,\rho)^{\frac{1}{d(\nu,\rho)+2}} n(\Lambda)^{-\frac{1}{d(\nu,\rho)+2}} \times \right. 
	 \left.(\log n(\Lambda))^{1/(d(\nu,\rho)+2)} \right),
	\end{align*} 
	where $n(\Lambda) = \max \{n : \sum_{h=1}^{n} \lambda(z_h) \leq \Lambda \}$. Here, $z_h = \zeta^{-1}(\nu\rho^h)$.  
\end{theorem}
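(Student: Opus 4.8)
The plan is to first reduce the simple regret to a per-query (average) regret and then establish a HOO-style cumulative regret bound that can be plugged in. Because the returned point $\hat{X}_\Lambda$ is drawn uniformly from the $N(\Lambda)$ queried points $x_1,\dots,x_{N(\Lambda)}$, conditioning on the trajectory gives $S(\Lambda) = \EE\big[\tfrac{1}{N(\Lambda)}\sum_{t=1}^{N(\Lambda)}(f^* - f(X_t))\big]$, so it suffices to control the average instantaneous regret. To pass from the cost budget to a guaranteed query count, I would observe that the nodes queried by MFHOO always form a rooted subtree (a node is queried as a leaf only after its parent has been expanded), and since $\lambda(z_h)$ is increasing in $h$, the total cost of any $n$-node such subtree is maximized by the depth-$n$ chain, giving $\sum_{t=1}^{n}\lambda(z_{H_t}) \le \sum_{h=0}^{n-1}\lambda(z_h) \le \sum_{h=1}^{n}\lambda(z_h)$. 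Taking $n = n(\Lambda)$ shows the first $n(\Lambda)$ queries always fit the budget, hence $N(\Lambda) \ge n(\Lambda)$ deterministically. Combined with a cumulative regret bound $R_n = \mathcal{O}(g(n))$ for which $g(n)/n$ is decreasing, this yields $S(\Lambda) \le g(n(\Lambda))/n(\Lambda)$ (up to a low-probability bad event contributing $\mathcal{O}(\mathrm{diam}(\mathcal{X})/n)$).

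The core is the cumulative bound $R_n = \mathcal{O}\big(C(\nu,\rho)^{1/(d+2)} (\log n)^{1/(d+2)} n^{(d+1)/(d+2)}\big)$ with $d = d(\nu,\rho)$, following the HOO analysis of \cite{bubeck2011x}. First I would define a high-probability good event on which every empirical mean concentrates: for a node $(h,i)$ the average $\hat{\mu}_{h,i}$ is built from observations collected at descendants, all at heights $\ge h$ and hence at fidelities with bias $\zeta(z_{H_s}) = \nu\rho^{H_s} \le \nu\rho^h$. A sub-Gaussian martingale (Hoeffding--Azuma) bound controls the noise by $\sqrt{2\sigma^2 \log n / T_{h,i}}$, the term $\nu\rho^h$ absorbs the within-cell variation of $f$ via Assumption~\ref{as:partition}, and the correction $\zeta(z_h) = \nu\rho^h$ absorbs the aggregate bias. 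Consequently the $U$-values are valid upper confidence bounds, $U_{h,i} \ge \sup_{x\in(h,i)} f(x)$ on the good event, which propagates through the $\min/\max$ recursion to give $B_{h,i^*} \ge f^*$ along the optimal path. The standard optimism argument then shows a node can be selected only while its confidence radius dominates its gap, so any node at depth $h$ queried after the optimistic phase satisfies $T_{h,i} = \mathcal{O}(\sigma^2 \log n / (\nu\rho^h)^2)$ and, by Assumption~\ref{as:partition}, is $\mathcal{O}(\nu\rho^h)$-optimal, i.e. every point it contains has instantaneous regret $\mathcal{O}(\nu\rho^h)$.

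To assemble the bound I would count near-optimal cells with the near-optimality dimension: the number of depth-$h$ cells that are $2\nu\rho^h$-optimal is $\mathcal{N}_h(2\nu\rho^h) \le C(\nu,\rho)\rho^{-dh}$ by Definition~\ref{def:dimension}. Decomposing the realized regret by the depth $H_t$ of each queried node and splitting at a free threshold $H$, the contribution from depths $\le H$ is at most $\sum_{h\le H} C(\nu,\rho)\rho^{-dh}\cdot \mathcal{O}(\sigma^2\log n/(\nu\rho^h)^2)\cdot \mathcal{O}(\nu\rho^h) = \mathcal{O}\big(\tfrac{C(\nu,\rho)\sigma^2\log n}{\nu}\rho^{-(d+1)H}\big)$, while every query at depth $>H$ is $\mathcal{O}(\nu\rho^H)$-optimal, so the remaining $\le n$ queries contribute $\mathcal{O}(\nu\rho^H n)$. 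Optimizing the trade-off by choosing $\rho^{-H} \asymp (n/\log n)^{1/(d+2)}$ balances the two terms and produces the exponent $(d+1)/(d+2)$ together with the constant $C(\nu,\rho)^{1/(d+2)}$; dividing by $n$ and substituting $n=n(\Lambda)$ gives exactly the stated simple regret.

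The main obstacle I anticipate is the concentration step in the multi-fidelity setting: unlike plain HOO, $\hat{\mu}_{h,i}$ averages observations drawn at heterogeneous, adaptively chosen fidelities, so its mean is a bias-perturbed average rather than $f$ evaluated at a fixed point. The argument hinges on the fact that every sample in the subtree of $(h,i)$ sits at height $\ge h$ and thus carries bias at most $\nu\rho^h = \zeta(z_h)$, so the single correction term $\zeta(z_h)$ uniformly dominates the aggregate bias; making this rigorous requires a careful martingale decomposition that separates the (per-sample bounded) bias from the sub-Gaussian noise while respecting the data-dependent stopping of $T_{h,i}$. The remaining care is in ensuring the optimism/validity argument still goes through under Assumption~\ref{as:partition} with $c>0$, which is precisely the strengthening flagged in the text as plausibly necessary for the HOO guarantee.
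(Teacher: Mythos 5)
Your proposal is correct and follows essentially the same route as the paper: the deterministic lower bound $N(\Lambda)\geq n(\Lambda)$ via the worst-case depth-$n$ chain (the paper's Lemma~\ref{lem:numsteps}), the HOO-style cumulative bound in which the extra $\zeta(z_h)=\nu\rho^h$ term in $U_{h,i}$ absorbs the fidelity bias of all descendants at heights $\geq h$ (the paper's Lemmas~\ref{lem:opt}--\ref{lem:submain}), the near-optimality-dimension counting with a depth threshold $H$ optimized to get the $(d+1)/(d+2)$ exponent (Theorem~\ref{thm:main2}), and finally dividing by $n(\Lambda)$ using monotonicity of $R_n/n$. The only cosmetic difference is that your two-way depth split compresses the paper's three-way partition $\mathcal{T}_1\cup\mathcal{T}_2\cup\mathcal{T}_3$; your statement that every query at depth $>H$ is $\mathcal{O}(\nu\rho^H)$-optimal holds only for descendants of near-optimal depth-$H$ cells, but the deep queries inside suboptimal subtrees are already covered by your $T_{h,i}$ accounting, so the argument is the same.
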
 

{\bf Comparison with HOO~\cite{bubeck2011x}:} The simple regret bound that is attained by HOO~\cite{bubeck2011x} (operating at the highest fidelity) given the same cost budget $\Lambda$ is $S'(\Lambda) = \mathcal{O} ( (\Lambda/\lambda(1))^{-1/(d(\nu,\rho)+2)} (\log (\Lambda/\lambda(1)))^{1/(d(\nu,\rho)+2)} ).$ It is easy to verify that $S(\Lambda) < S'(\Lambda)$, as $\lambda(z_h) \leq \lambda(1)$ for all $z_h$. In many real-world situations like hyper-parameter tuning the regret of MFHOO can be much less as compared to HOO operating at the highest fidelity. In fact the real gain in MFHOO is observed in situations where evaluating at the highest fidelity is extremely expensive and $\Lambda$ is of the order of $\lambda(1)$. We will now provide a corollary that highlights this, which is motivated by the following \textit{illustrative example}.  The setting below and analogous corollaries for the {\em noiseless case} is available in \cite{sen2018multi}.

{\bf Illustrative Example: } Let us consider our hyper-parameter tuning example again, however let us use the fidelity range to model the number of iterations of an iterative learning algorithm. For concreteness, we will assume that the learning iterations are gradient descent steps on a smooth strongly convex objective. Let $z=1$ represent training to completion which might take $N$ iterations or descent steps. Cheaper fidelities correspond to training for fewer iterations and validating, for instance $z_n < 1$ corresponds to training till $n < N$ iterations and is $O(n/N)$. The cost is linearly proportional to the fidelity, while the error of gradient descent at fidelity $z_n$ is $O(r^n)$ for some $r \in (0,1)$. Thus, if  $\zeta(z_n)$ scales as $\nu_*\rho_*^h$, then $n$ scales as $O(h0).$  It then follows that $\lambda(z_n) = O(h).$
%
% Therefore, it can be shown that if $\zeta(z_n) \sim \nu_*\rho_*^h$, then $n = O(h)$ and therefore $\lambda(z_n) = O(h)$.
%
It should also be noted that in the context of optimizing deep networks, where training till completion can take many hours, the total cost budget $\Lambda$ is usually a small multiple of $\lambda(1)$ (evaluation cost at the highest fidelity). This motivates the following condition and the corollary that follows.

\begin{condition}
\label{cond:geom} 	$\zeta(.)$ and $\lambda(.)$ are such that $\lambda(z^*_h) \leq \min\{\beta h,\lambda(1)\}$ for some constant $\beta > 0$. Here, $z^*_h = \zeta^{-1}(\nu_*\rho_*^h)$. Further, we assume that $\Lambda \leq \lambda(1)^{1+\epsilon}$ for some $\epsilon \in (0,1)$. Here, $\beta$ is a universal constant which is much less than $\lambda(1)$. 
\end{condition}

Under Condition~\ref{cond:geom}, we get the following corollary of Theorem~\ref{thm:mfhoo}. 
\begin{corollary}
\label{cor:geom}
If Algorithm~\ref{alg:mfhoo} is run with parameters $(\nu,\rho)$ that satisfy Assumption~\ref{as:partition} such that $\nu > \nu^*$ and $\rho > \rho^*$ with a total cost budget $\Lambda$, then the simple regret is bounded under Condition~\ref{cond:geom} as,
\begin{align*}
S(\Lambda) &= \mathcal{O} \left( C(\nu,\rho)^{\frac{1}{d(\nu,\rho)+2}} n_g(\Lambda)^{-\frac{1}{d(\nu,\rho)+2}} \times \right. 
\left.(\log n_g(\Lambda))^{1/(d(\nu,\rho)+2)} \right),
\end{align*}
where $n_g(\Lambda) \geq \sqrt{2 (\Lambda - \lambda(1))/\beta}.$ 
\end{corollary}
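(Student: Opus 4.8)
The plan is to treat this corollary as a direct specialization of Theorem~\ref{thm:mfhoo}: the regret expression is identical except that the implicitly defined quantity $n(\Lambda) = \max\{n : \sum_{h=1}^n \lambda(z_h) \leq \Lambda\}$ is replaced by the explicit lower bound $n_g(\Lambda) = \sqrt{2(\Lambda-\lambda(1))/\beta}$. Since the leading factor $n(\Lambda)^{-1/(d(\nu,\rho)+2)}$ is decreasing in $n(\Lambda)$, any valid lower bound $n_g(\Lambda) \le n(\Lambda)$ can be substituted to produce a valid upper bound on the regret. Thus the whole task reduces to (i) producing such an explicit lower bound on $n(\Lambda)$ from Condition~\ref{cond:geom}, and (ii) checking that the logarithmic factor behaves, so that the substitution is legitimate inside the $\mathcal{O}(\cdot)$.

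First I would control the per-height query cost. Because the algorithm is run with $\nu > \nu^*$ and $\rho > \rho^*$, and $\rho^* \in (0,1)$, we have $\nu\rho^h > \nu^*(\rho^*)^h$ for every $h \ge 0$. The bias function $\zeta$ is monotone decreasing, hence $\zeta^{-1}$ is decreasing, giving $z_h = \zeta^{-1}(\nu\rho^h) \le \zeta^{-1}(\nu^*(\rho^*)^h) = z_h^*$. Since $\lambda$ is monotone increasing, $\lambda(z_h) \le \lambda(z_h^*) \le \min\{\beta h, \lambda(1)\} \le \beta h$ by Condition~\ref{cond:geom}. Summing over heights gives the clean estimate $\sum_{h=1}^n \lambda(z_h) \le \beta\, n(n+1)/2$.

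Next I would solve the budget inequality for $n$. Plugging $n = n_g(\Lambda) = \sqrt{2(\Lambda-\lambda(1))/\beta}$ into the cost estimate and using $\beta n_g^2/2 = \Lambda - \lambda(1)$ yields $\sum_{h=1}^{n_g} \lambda(z_h) \le (\Lambda - \lambda(1)) + \beta n_g/2$. This is at most $\Lambda$ precisely when $\beta n_g \le 2\lambda(1)$, i.e. $\sqrt{2\beta(\Lambda-\lambda(1))} \le 2\lambda(1)$, which rearranges to $\beta \le 2\lambda(1)^{1-\epsilon}$ once we invoke $\Lambda \le \lambda(1)^{1+\epsilon}$. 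Since $\epsilon < 1$ and $\lambda(1) \ge 1$, the right-hand side is at least $2$ and grows with $\lambda(1)$, so the inequality holds in the stated regime where $\beta$ is a universal constant much smaller than $\lambda(1)$. Hence $\sum_{h=1}^{n_g}\lambda(z_h) \le \Lambda$, which by the definition of $n(\Lambda)$ gives $n(\Lambda) \ge n_g(\Lambda)$.

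The remaining step is the substitution into Theorem~\ref{thm:mfhoo}, and this is where I expect the only real subtlety. Decreasing the exponentiated factor is immediate from $n(\Lambda)\ge n_g(\Lambda)$, but the companion factor $(\log n(\Lambda))^{1/(d(\nu,\rho)+2)}$ is increasing in $n(\Lambda)$, so I cannot simply replace $n(\Lambda)$ by its lower bound there. To handle it I would use the trivial upper bound $n(\Lambda) \le \Lambda$ (each query costs $\lambda(z)\ge 1$) together with $\Lambda = \beta n_g^2/2 + \lambda(1) = \mathcal{O}(\beta\, n_g^2)$, which gives $\log n(\Lambda) \le \log\Lambda = \mathcal{O}(\log n_g)$ for fixed $\beta$. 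Absorbing this into the $\mathcal{O}(\cdot)$, the product $n(\Lambda)^{-1/(d+2)}(\log n(\Lambda))^{1/(d+2)}$ is bounded by $\mathcal{O}\big(n_g(\Lambda)^{-1/(d+2)}(\log n_g(\Lambda))^{1/(d+2)}\big)$, which is exactly the claimed bound. The main obstacle is thus not any deep estimate but the careful verification of the budget inequality in the regime $\Lambda \le \lambda(1)^{1+\epsilon}$, $\beta \ll \lambda(1)$, ensuring the explicit $n_g(\Lambda)$ is actually admissible.
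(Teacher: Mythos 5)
Your proposal is correct and follows exactly the derivation the paper intends (the corollary is left as a direct specialization of Theorem~\ref{thm:mfhoo}): bound $\lambda(z_h)\le\lambda(z_h^*)\le\beta h$ using $\nu>\nu^*$, $\rho>\rho^*$ and monotonicity of $\zeta^{-1}$ and $\lambda$, sum to get a quadratic cost, and verify admissibility of $n_g(\Lambda)$ under $\Lambda\le\lambda(1)^{1+\epsilon}$ and $\beta\ll\lambda(1)$. The only simplification worth noting is that your final "subtlety" is unnecessary: the combined factor $\left(\log n / n\right)^{1/(d(\nu,\rho)+2)}$ is itself decreasing in $n$ (for $n\ge e$), which is precisely the monotonicity the paper already invokes in the proof of Theorem~\ref{thm:mfhoo}, so the lower bound $n(\Lambda)\ge n_g(\Lambda)$ can be substituted directly without the detour through $n(\Lambda)\le\Lambda$.
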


Thus, Corollary~\ref{cor:geom} implies that under Condition~\ref{cond:geom} the simple regret of MFHOO (Algorithm~\ref{alg:mfhoo}) scales as $S(\Lambda) = \mathcal{O}(( \log \Lambda/\sqrt{\Lambda} )^{1/(d(\nu,\rho)+2)})$. On the other hand, HOO~\cite{bubeck2011x} would only be able to evaluate $\Lambda/\lambda(1)$ points. Thus, the simple regret of  HOO would scale as $S'(\Lambda) = \mathcal{O}\left(\left( \log \Lambda/\Lambda^{\epsilon/(1+\epsilon)} \right)^{1/(d(\nu,\rho)+2)}\right)$, as $\Lambda \leq \lambda(1)^{1 + \epsilon}$.  Thus, in this setting $S(\Lambda)$ can be order-wise less than $S(\Lambda')$, as $\epsilon < 1$.  
%
%Corollary~\ref{cor:geom} implies that under Condition~\ref{cond:geom}, Algorithm~\ref{alg:mfhoo} can have a much lower scaling of regret as compared to HOO, as $\lambda(1)$ can be scaling while $\frac{1 - \gamma}{\log (1/\gamma)}$ is a constant. 

Our next result in Theorem~\ref{thm:mfpoo} shows that at least one of the MFHOO instances spawned by Algorithm~\ref{algo:robust_algo} has a simple regret close to that of an MFHOO run with the parameters $(\nu^*,\rho^*)$. Thus, MFPOO (Algorithm~\ref{algo:robust_algo}) can recover the performance of MFHOO run with the optimal parameters when supplied with just an upper bound on $\nu^*$ and $\rho^*$ respectively. 

\begin{theorem}
\label{thm:mfpoo}
If Algorithm~\ref{algo:robust_algo} is run with parameters $(\nu_{max} (\geq \nu^*),\rho_{max} (\geq \rho^*))$ and given a total cost budget $\Lambda$, then the simple regret of at least one of the MFHOO instances spawned by  Algorithm~\ref{algo:robust_algo} is bounded as follows,
\begin{align*}
S(\Lambda) &= \mathcal{O} \left((\nu_{max}/\nu^*)^{D_{max}} C(\nu^*,\rho^*)^{\frac{1}{d(\nu^*,\rho^*)+2}} \times \right. 
\left.  \left( \frac{\log n(\Lambda/\log \Lambda)}{n(\Lambda/\log \Lambda)}\right)^{\frac{1}{2 + d(\nu^*,\rho^*)}} \right). \numberthis 
\end{align*}
\end{theorem}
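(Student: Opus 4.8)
\textbf{Proof Proposal for Theorem~\ref{thm:mfpoo}.}

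The plan is to reduce the analysis of MFPOO to that of MFHOO (Theorem~\ref{thm:mfhoo}) by showing that among the $N$ instances spawned with geometrically-spaced values $\rho_i = \rho_{max}^{N/(N-i-1)}$, at least one instance runs with parameters $(\nu_{max},\rho_i)$ that are \emph{simultaneously} valid surrogates for the optimal pair $(\nu^*,\rho^*)$, in the sense that they satisfy Assumption~\ref{as:partition} and that the resulting near-optimality dimension and constant are controlled by $d(\nu^*,\rho^*)$ and $C(\nu^*,\rho^*)$ up to the stated multiplicative factors. First I would establish the elementary monotonicity facts: if Assumption~\ref{as:partition} holds for $(\nu^*,\rho^*)$, it continues to hold for any $(\nu,\rho)$ with $\nu \geq \nu^*$ and $\rho \geq \rho^*$, since enlarging the smoothness constants only relaxes the hypothesis $\Delta_{h,i} \leq c\nu\rho^h$ and strengthens the conclusion bound $\max\{2c,c+1\}\nu\rho^h$. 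This guarantees that every spawned instance with $\rho_i \geq \rho^*$ is a legitimate input to Theorem~\ref{thm:mfhoo}.

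Next I would carry out the covering argument that is the heart of the POO technique from~\cite{grill2015black}, adapted to the multi-fidelity cost accounting. Because the exponents $N/(N-i-1)$ sweep from large values down toward $1$ as $i$ ranges over $\{0,\dots,N-1\}$, the grid $\{\rho_i\}$ forms a fine geometric net of $(\rho_{max},1)$; with $N = \tfrac{1}{2}D_{max}\log(\Lambda/\log\Lambda)$ and $D_{max}=\log 2/\log(1/\rho_{max})$, there exists an index $i$ with $\rho^* \leq \rho_i$ and $\rho_i$ close enough to $\rho^*$ that $d(\nu_{max},\rho_i) \leq d(\nu^*,\rho^*)$ while the near-optimality constant inflates by no more than the factor $(\nu_{max}/\nu^*)^{D_{max}}$. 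The precise bookkeeping here mirrors Lemma/Theorem 2 of~\cite{grill2015black}: one relates $\mathcal{N}_h(2\nu_{max}\rho_i^h)$ to $\mathcal{N}_h(2\nu^*(\rho^*)^h)$ through the definition of near-optimality dimension, and the factor $(\nu_{max}/\nu^*)^{D_{max}}$ arises from converting a ratio of the $\nu$-scales into a power of $\rho$ via $D_{max}$.

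Then I would plug this good index into Theorem~\ref{thm:mfhoo}, but with the reduced per-instance budget $(\Lambda - N\lambda(1))/N$ in place of $\Lambda$. Since $N = \Theta(\log\Lambda)$ and $\lambda(1)$ is a fixed constant, the effective budget is of order $\Lambda/\log\Lambda$, which is exactly why the regret in the statement is expressed through $n(\Lambda/\log\Lambda)$ rather than $n(\Lambda)$; I would verify that $n\big((\Lambda - N\lambda(1))/N\big) = \Omega\big(n(\Lambda/\log\Lambda)\big)$ using the monotonicity of the query-count map $n(\cdot)$. Finally, the returned point of MFPOO is the $\arg\max$ (up to noise) over the $N$ instance outputs evaluated at $z=1$; I would argue that this selection step does not increase the regret beyond an additive lower-order term, so the overall simple regret is dominated by the guarantee of the single good instance, yielding the claimed bound.

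The main obstacle I anticipate is the second step: making the geometric-net argument quantitatively tight enough to produce exactly the factor $(\nu_{max}/\nu^*)^{D_{max}}$ and to ensure the near-optimality dimension of the chosen surrogate does not exceed $d(\nu^*,\rho^*)$. Controlling how $d(\nu,\rho)$ and $C(\nu,\rho)$ vary as $(\nu,\rho)$ moves off the optimum requires careful use of Definition~\ref{def:dimension}, since a slightly-too-large $\rho_i$ can in principle change the infimum defining $d$; the discretization of $N$ must be fine enough that the best grid point lands in the regime where the dimension is preserved, and verifying this interplay with the cost budget (through $D_{max}$ and the $\Lambda/\log\Lambda$ scaling) is where the delicate constants live.
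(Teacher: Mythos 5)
Your high-level architecture matches the paper's proof: spawn the geometric grid $\{\rho_i\}$, invoke the covering lemma of Grill et al.\ to compare $(\nu_{max},\rho_i)$ against $(\nu^*,\rho^*)$, and feed a good grid point into Theorem~\ref{thm:mfhoo} with per-instance budget of order $\Lambda/\log\Lambda$. But your central step contains a claim that cannot be established and that the correct proof deliberately avoids: you require a grid point with $d(\nu_{max},\rho_i) \leq d(\nu^*,\rho^*)$, i.e.\ exact preservation of the near-optimality dimension. Since $(\nu^*,\rho^*)$ is by definition the minimizer of $d(\cdot,\cdot)$, and the only available control (Lemma~\ref{lem:grill}, from~\cite{grill2015black}) gives $d(\nu_{max},\rho) \leq d(\nu^*,\rho^*) + \log 2\left(1/\log(1/\rho) - 1/\log(1/\rho^*)\right)$, any grid point with $\rho_i > \rho^*$ may have strictly larger dimension; no finite grid can guarantee preservation. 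The paper instead picks $\bar\rho \geq \rho^*$ with $d(\nu_{max},\bar\rho) - d(\nu^*,\rho^*) \leq D_{max}/N$ and then -- this is the essential computation missing from your plan -- shows that an excess of $D_{max}/N$ in the exponent costs only a constant multiplicative factor: because $N = (1/2)D_{max}\log(\Lambda/\log\Lambda)$ and $n(\Lambda/N) \leq \Lambda$, one has
\begin{align*}
-\log \left( \frac{\log n(\Lambda/N)}{n(\Lambda/N)}\right) \frac{D_{max}/N}{(2 + d(\nu^*,\rho^*))^2} \leq 2,
\end{align*}
so the exponent can be replaced by $1/(2+d(\nu^*,\rho^*))$ at the price of a factor $e^{2}$. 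Without this absorption step your covering argument cannot close; with it, exact dimension preservation is unnecessary. (The factor $(\nu_{max}/\nu^*)^{D_{max}}$ then arises from bounding $\log C(\nu_{max},\bar\rho)/(2+d(\nu_{max},\bar\rho))$, roughly as you describe.)

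A second, smaller problem: your final step argues that the selection stage (one noisy evaluation of each returned point at $z=1$, then argmax) adds only a lower-order term. This is both unnecessary -- the theorem claims a bound only for \emph{at least one} spawned MFHOO instance, not for the point MFPOO ultimately returns -- and unjustified: with a single noisy evaluation per candidate the selection error is of order $\sigma\sqrt{\log N}$, which does not vanish as $\Lambda$ grows, and the paper explicitly notes in Appendix~\ref{sec:mfpoo} that the POO-style selection guarantee cannot be carried over because points returned by different MFHOO instances carry different fidelity biases. A further minor slip: the grid $\rho_i = \rho_{max}^{N/(N-i-1)}$ lies in $(0,\rho_{max}]$, not in $(\rho_{max},1)$; what must be fine is its image under $\rho \mapsto 1/\log(1/\rho)$ over $[\rho^*,\rho_{max}]$.
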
 

The simple regret bound in Theorem~\ref{thm:mfpoo} should be compared to that of Theorem~\ref{alg:mfhoo} when Algorithm~\ref{alg:mfhoo} is run with the best parameters $(\nu^*,\rho^*)$. The expression is Theorem~\ref{thm:mfpoo} is order-wise same as  the simple regret achieved by MFHOO run at the optimal parameters $(\nu^*,\rho^*)$ but with a budget of $\Lambda/\log \Lambda$. This is a minor loss in terms of simple regret and is achieved without exact knowledge of the optimal parameters. For instance, under Condition~\ref{cond:geom}, the simple regret of MFPOO is only a factor of $O((\log \Lambda)^{1/(2+d(\nu^*,\rho^*))})$ away from that of MFHOO run with parameters $(\nu^*,\rho^*)$. Note that there are differences between the style of results in~\cite{grill2015black} and Theorem~\ref{thm:mfpoo} ( more details in Appendix~\ref{sec:mfpoo}). 

%\begin{remark}
%Note that Theorem~\ref{thm:mfpoo} only states that one of the MFHOO instances spawned by Algorithm~\ref{algo:robust_algo} has a simple regret bound stated in the theorem. However, we do not provide any theoretical guarantees regarding the fact that with high probability $i^*$ selected in step 6 of Algorithm~\ref{algo:robust_algo} is the MFHOO instance with the best performance. The analysis of POO~\cite{grill2015black} can provide such a guarantee in a non-multi-fidelity setup by keeping track of the average value of the points evaluated by each of the HOO instances spawned. This analysis cannot be extended as the points evaluated by different MFHOO instances are all at different gradations of the fidelity and thus have varying biases. However, it should be noted that in practice the point $x_{\Lambda,i}$ returned by MFHOO instance $i$ is chosen according to the scheme in Remark~\ref{remark:practice}. Therefore, the function value of the point returned is a good indicator of the overall performance of the MFHOO instance and thus it is expected that the best performing MFHOO instance is selected in step 6. This is also corroborated by the strong empirical performance of MFPOO in our real and synthetic experiments in Section~\ref{sec:sims}. 
%\end{remark}

\section{Empirical Results}
\label{sec:sims}
\begin{figure*}[!ht]
	\centering
	\subfloat[][]{\includegraphics[width = 0.32\linewidth]{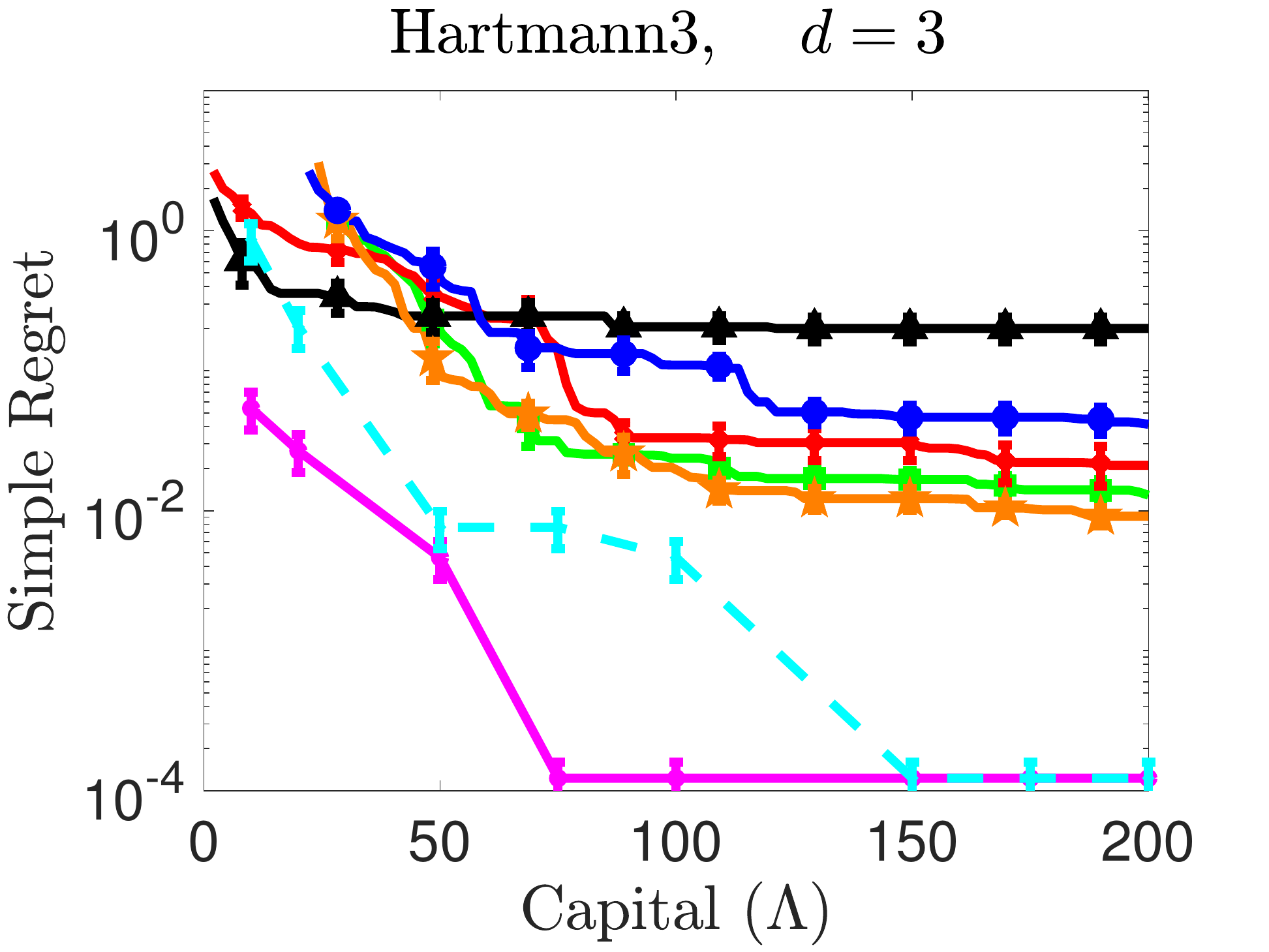}\label{fig:h3}} \hfill
	\subfloat[][]{\includegraphics[width = 0.32\linewidth]{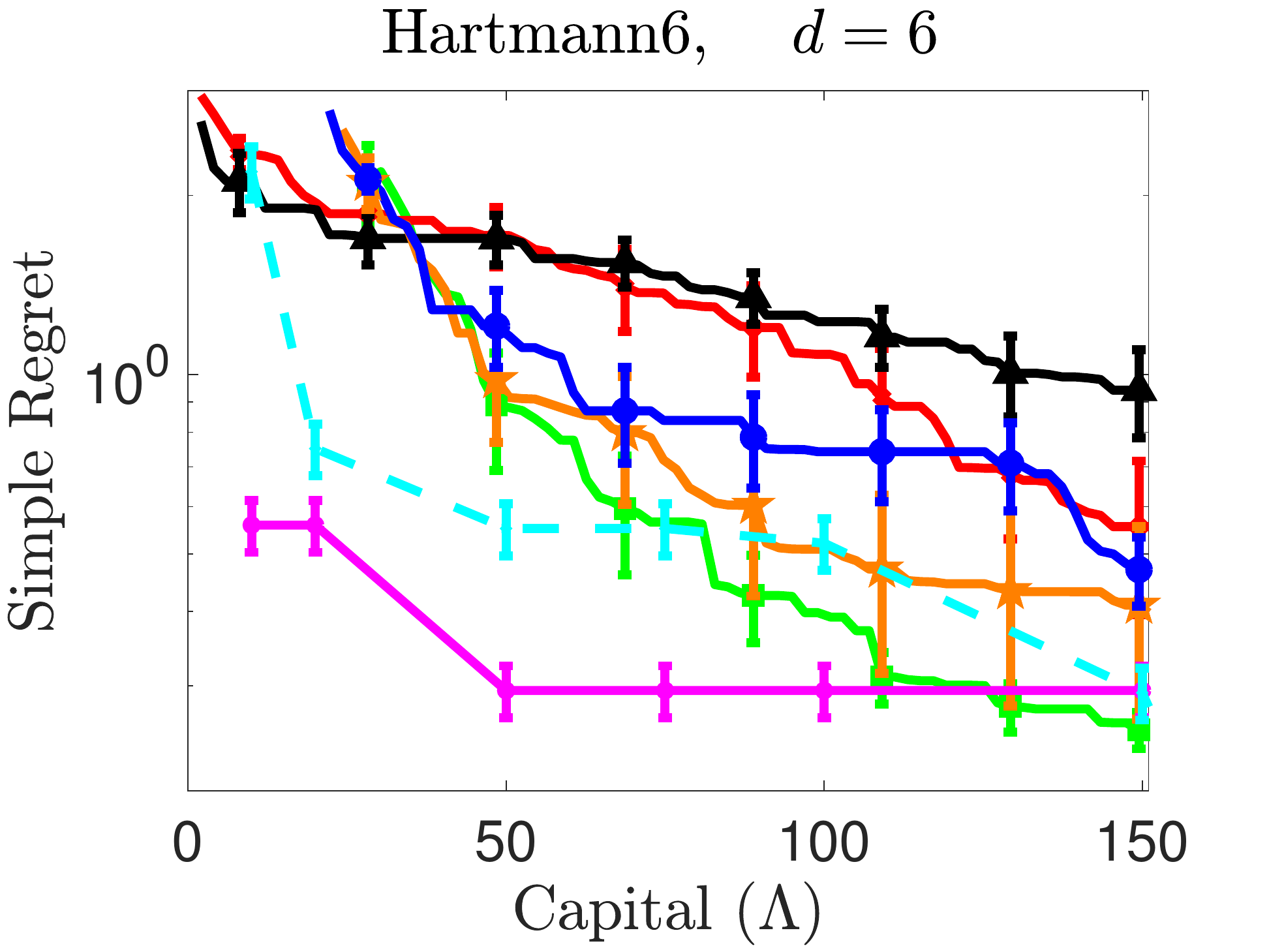}\label{fig:h6}} \hfill
	\subfloat[][]{\includegraphics[width = 0.32\linewidth]{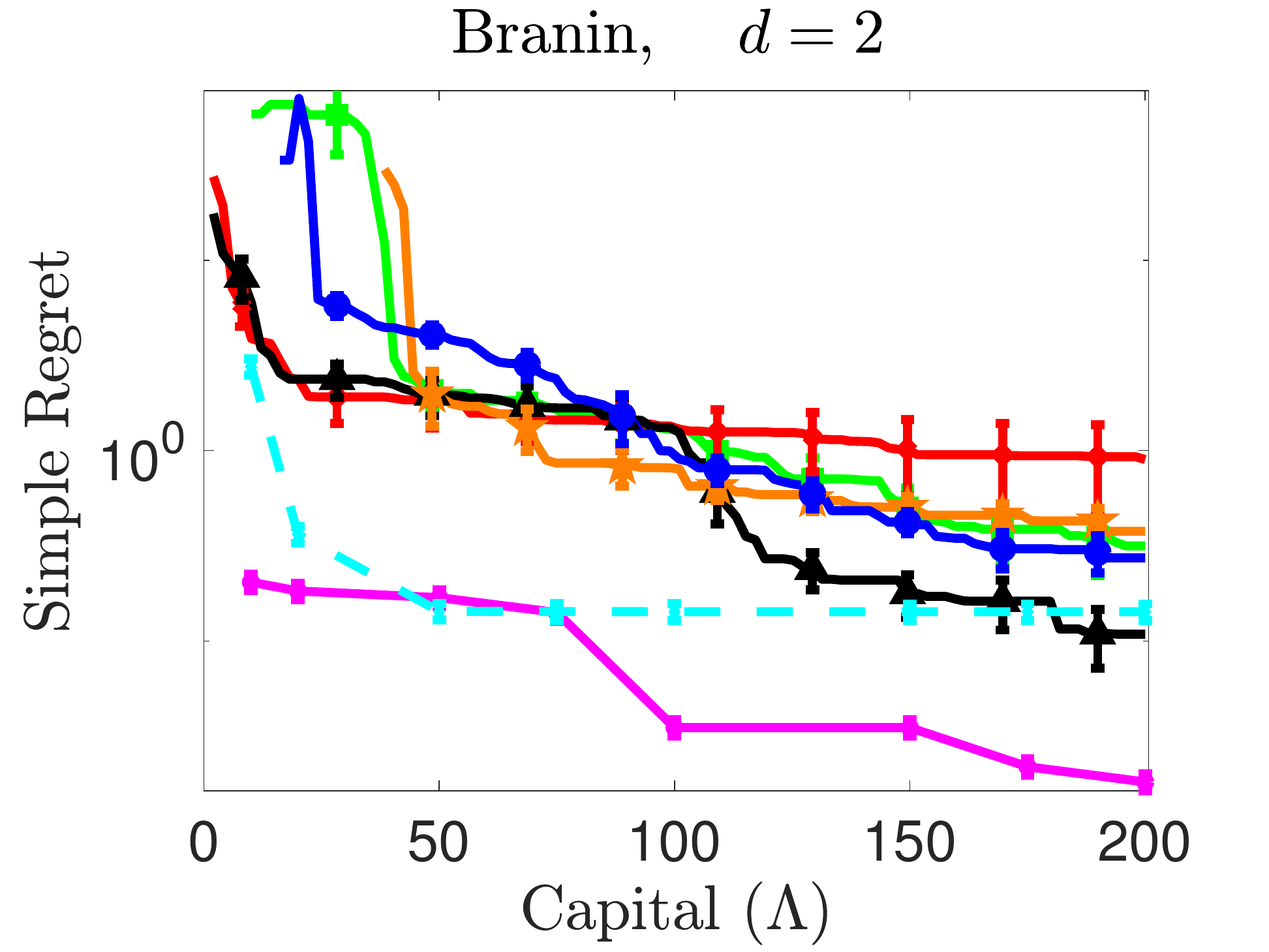}\label{fig:b}} \\
	\subfloat[][]{\includegraphics[width = 0.32\linewidth]{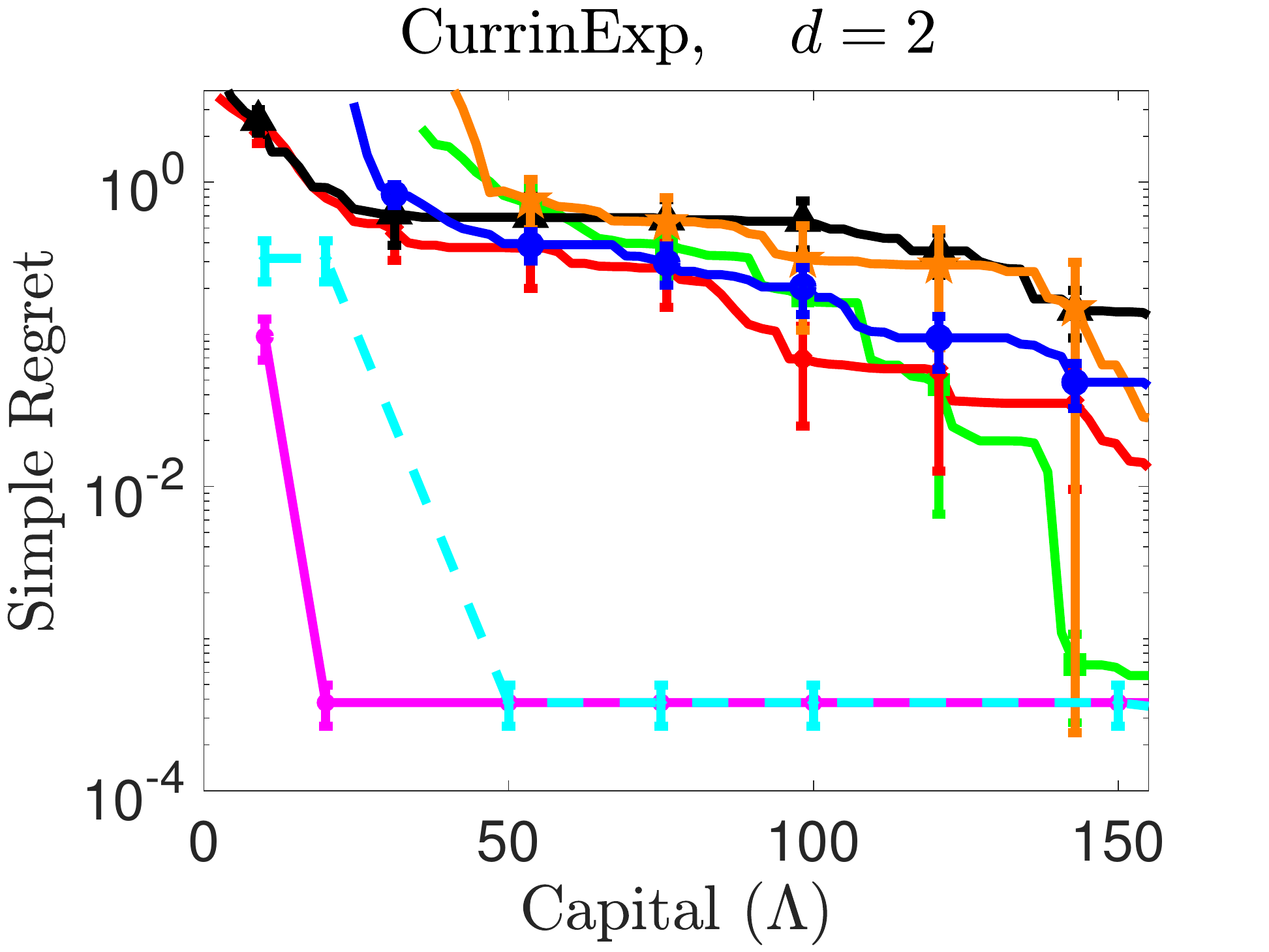}\label{fig:c}} \hfill
	\subfloat[][]{\includegraphics[width = 0.32\linewidth]{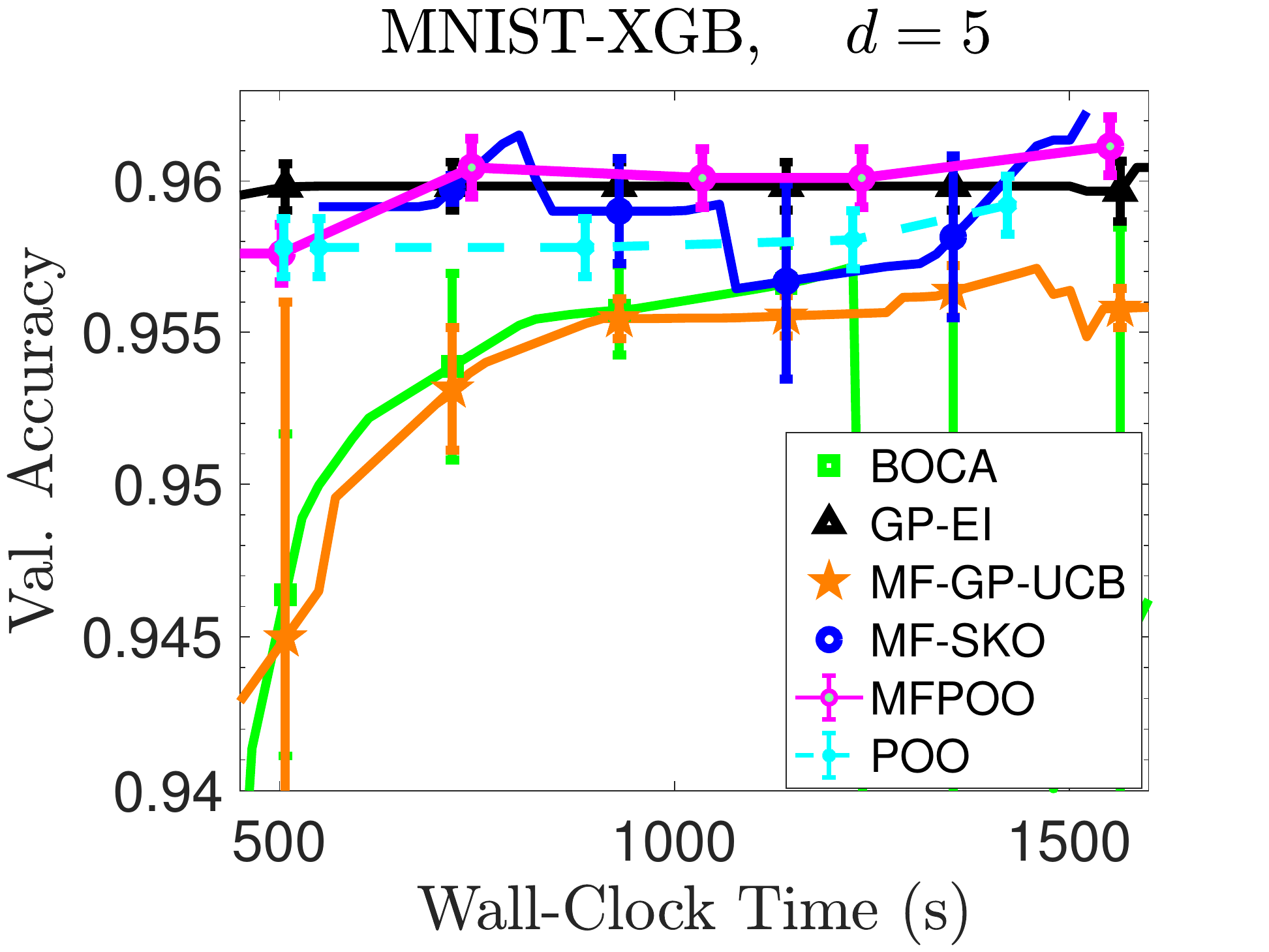}\label{fig:mnist}}\hfill 
	\subfloat[][]{\includegraphics[width = 0.32\linewidth]{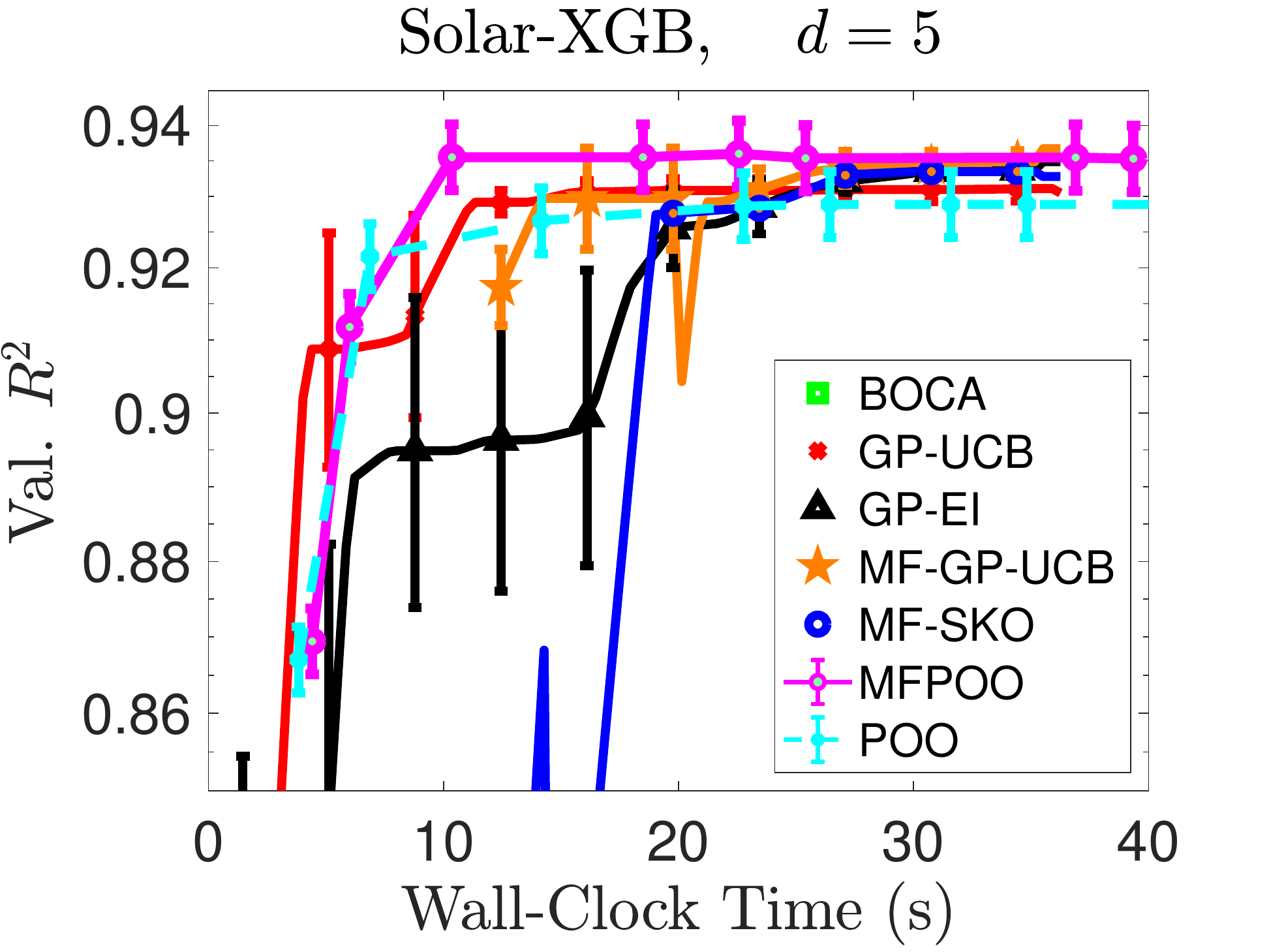}\label{fig:solar}} \\
	\subfloat[][]{\includegraphics[width = 0.32\linewidth]{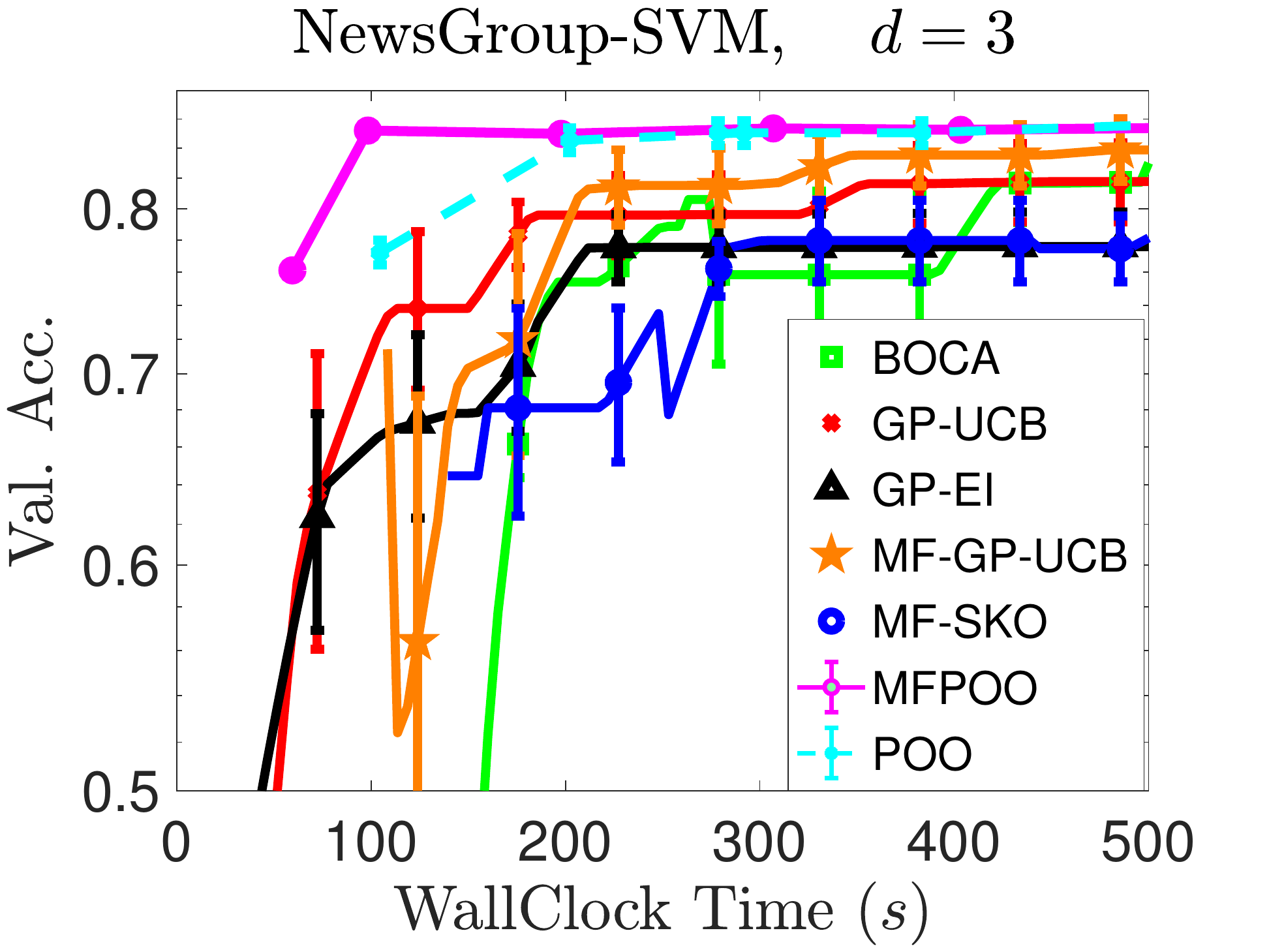}\label{fig:news}}
	\subfloat[][]{\includegraphics[width = 0.32\linewidth]{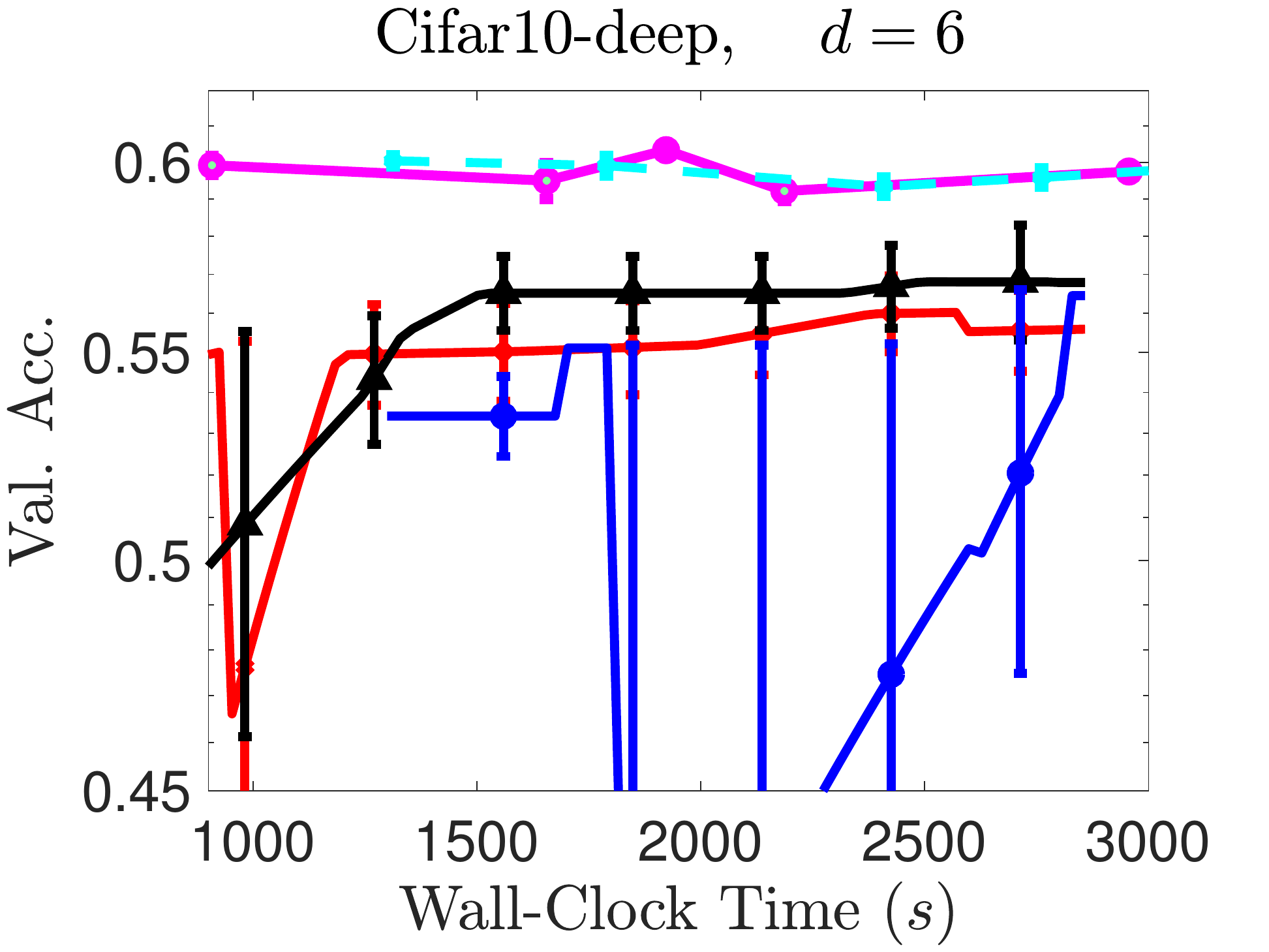}\label{fig:cifar10}} \hfill
	\caption{\small The \textit{common legend} for all the plots is presented in Fig.~\ref{fig:solar}, in the interest of space and clarity. Figures $(a)$ to $(d)$ consists of experiments on multi-fidelity versions of synthetic functions. The experiments are averaged over $10$ runs and the corresponding confidence bars are plotted. Figure $(e)$ shows the  $5$-fold cross-validation accuracy achieved vs. wall-clock time, for tuning XGB on the MNIST data-set. GP-UCB is omitted in this figure due to poor performance.  Figure $(f)$ shows the  $5$-fold cross-validation R-square achieved vs. wall-clock time, for tuning XGB on the Solar-Radiation regression data-set. BOCA is omitted in this figure due to poor performance. Figure $(g)$ shows the performance of the algorithms for tuning SVM on the 20-News Group dataset. Figure $(h)$ shows the comparison of various algorithms for tuning the hyper-parameters of a ConvNet on the Cifar-10 data-set. The code base provided for BOCA and MF-GP-UCB failed to converge for this data-set. All the experiments are averaged over $5$ runs.
	}
	\label{fig:allexp} 
\end{figure*}
In this section we empirically validate the performance of our algorithms as compared to other benchmark algorithms for the multi-fidelity black-box optimization setting on real and synthetic data-sets. We first compare the algorithms on popular synthetic benchmark functions commonly used in the black-box optimization literature. We also  empirically validate the performance of MFPOO against other algorithms for real-world use cases of hyper-parameter tuning. The algorithms under contention are: (i) BOCA~\cite{kandasamy2017multi} which is a multi-fidelity Gaussian Process (GP) based algorithm that can handle continuous fidelity spaces, (ii) MF-GP-UCB~\cite{kandasamy2016multi} which is a GP based multi-fidelity method that can handle finite fidelities, (iii) GP-EI criterion in bayesian optimization~\cite{jones1998efficient}, (iv) MF-SKO, the multi-fidelity sequential kriging optimisation method~\cite{huang2006sequential}, (v) GP-UCB~\cite{srinivas2009gaussian} and (vi) MFPOO (Algorithm~\ref{algo:robust_algo}) and (vii) POO~\cite{grill2015black}. 

In our implementation of MFPOO, we \textit{do not} assume access to a known bias function. In all our experiments it is assumed that the bias function has a parametric form $\zeta(z) = c(1-z)$. The parameter $c$ can be initialized and then updated online owing to the fact that different MFHOO instances spawned by MFPOO query the same node at different fidelities. In all our experiments we set $\rho_{max} = 0.95$. We provide more implementation details in Appendix~\ref{sec:implement}, in the interest of space. All experiments were performed on a 32-core Intel(R) Xeon(R) @ 2.60GHz machine, with a Nvidia 1080 Ti GPU.

%The parameter $c$ is estimated online as follows: (i) We start by choosing a random point $x \in \mathcal{X}$, which is queried at $z_1 = 0.8$ and $z_2 = 0.2$, giving observations $Y_1$ and $Y_2$. We initialize $c = 2|Y_1 - Y_2|/|z_1 - z_2|$. We also set $\nu_{max} = 2*c$. Note that this uses up a small portion of the budget ($\lambda(0.2)+\lambda(0.8)$). The structure of MFPOO is such that while running the parallel MFPOO instances the sames cells (representative point in the cell) is queried again at different fidelities say $z_1$ and $z_2$, yielding function values $Y_1$ and $Y_2$. If at any point $|Y_1 - Y_2|/|z_1 - z_2| > c$, we update $c \leftarrow 2c$. 

{\bf Synthetic Experiments: }
We now provide empirical results on commonly used synthetic benchmark functions. The multi-fidelity setup is introduced into the benchmark functions following the methodology in~\cite{kandasamy2017multi}. The exact details of the functions at different fidelities are provided in Appendix~\ref{sec:appSynthetic}. Note that the bias function is \textit{not} assumed to be known however the cost function is known. We add Gaussian noise in the function evaluations at different variances $\sigma^2$ as specified in Appendix~\ref{sec:appSynthetic}. The performance of the algorithms on 4 different benchmark functions are shown in Fig.~\ref{fig:allexp} (a) - (d). The functions used are Hartmann3, Hartmann6, Branin~\cite{szego1978towards} and CurinExp~\cite{currin1988bayesian}. At the top of each sub-figure, we mention the function name and the dimension of the domain ($d$). We can observe that the tree search based methods (MFPOO and POO) outperform the other benchmarks. Among the two, MFPOO performs better that POO, because it can effectively use multiple fidelities.

{\bf XGB on MNIST: } As our second experiment, we consider the task of tuning XGBOOST~\cite{chen2016xgboost} on the MNIST data-set~\cite{lecun1998gradient}. We consider a a subset consisting of $20000$ images. The black-box function being evaluated is the 5-Fold cross-validation accuracy at the the highest fidelity $z=1$, which refers to using the whole data-set. The fidelity range $\mathcal{Z} = [0,1]$ is mapped to $[500,20000]$, that is using a fidelity $z \in [0,1]$ implies using a randomly sub-sampled data-set consisting of $\floor{z*(15000) + 500}$ samples in order to measure the cross-validation error. The hyper-parameters being tuned and the respective ranges are: \textit{max\_depth}: [2,13], \textit{colsample\_bytree}: [0.2,0.9], \textit{n\_estimators}: [10,400], \textit{gamma}: [0,0.7], \textit{learning\_rate}: [0.05,0.3]. We plot the cross-validation accuracy achieved by different methods as a function of time in Fig.~\ref{fig:mnist}. MFPOO outperforms the other algorithms in terms of validation accuracy achieved. GP-EI is also promising on this data-set. The final cross-validation accuracy achieved by MFPOO and GP-EI are $0.9611$ and $0.9597$ respectively. Note that in this experiment a single experiment at the highest fidelity takes approximately $200$ seconds. The results are averaged over $5$ experiments. $\sigma$ in our algorithm is set to $0.05$. 

{\bf XGB on Solar Data: } We test the algorithms on a regression problem that involved predicting the level of solar radiation given several weather indicators~\cite{solar}. The data-set has $32684$ samples. The fidelities are mapped to the range $[700,32684]$ similar to the MNIST example above. The hyper-parameters and their ranges are also identical to the experiment above. The function value at the highest fidelity is the $5$-Fold cross-validation R-square on the whole data-set. The performances of the algorithms are plotted in Fig.~\ref{fig:solar}. It can be observed that MFPOO outperforms the other algorithms especially in the lower-time horizons. Note that a single experiment at the highest fidelity for this data-set takes $2$ seconds. All experiments were performed on the same machine. The results are averaged over $5$ experiments. More details are in Appendix~\ref{sec:implement}. 

{\bf SVM on 20 News Group:} In Fig.~\ref{fig:news}, we test the algorithms for tuning hyper-parameters of scikit-learn's SVM classifier module on the News Group dataset~\cite{Lang95}. The hyper-parameters to being tuned are: $(i)$ the regularization penalty in the range [1e-5,1e5] (accessed in the log. scale), $(ii)$ the kernel temperature ($\gamma$) also in the range [1e-5,1e5] and $(iii)$ kernel type between \{'rbf','poly'\}. We use a subset of $7000$ samples for training i.e $z = 1$ corresponds to using all $7000$ samples and $z = 0$ corresponds to a randomly chosen subset of size $100$. The black-box function corresponds to the 5-fold cross-validation accuracy at the chosen fidelity. We can observe that MFPOO outperforms the other algorithms especially in lower budget settings. One evaluation at the highest fidelity takes 40 seconds for this experiment. 

{\bf ConvNet on Cifar-10:} In Fig.~\ref{fig:cifar10}, we employ the algorithms for tuning the hyper-parameters of a deep convolutional network for classifying the cifar-10~\cite{krizhevsky2009learning} dataset. As the training set we use a subset of $50k$ samples from the original training data. The black-box function is the accuracy on a fixed validation set (randomly chosen half of the official test set) after 30-epochs. Note that we want to test the relative accuracy obtained by each of the tuning algorithms and therefore in the interest of time we set maximum number of epochs to be $30$, even though higher accuracy can be obtained by training for more epochs. We use the AlexNet~\cite{krizhevsky2012imagenet} architecture. The hyper-parameters being tuned are: $(i)$ number of output channels in first conv. layer in the range [32,128], $(ii)$ kernel size in first layer in [5,14], $(iii)$ number of output channels in second layer in [128,256], $(iv)$ kernel size in second layer in [3,13], $(v)$ learning rate of Adam optimizer in [1e-5,1e-2] accessed in log-scale and $(vi)$ dropout probability in the last layer in the range [0,0.4]. The fidelity is the number of samples used for training where $z = 0$ corresponding to $1000$ randomly chosen training samples while $z=1$ means using $50k$ samples for training. It takes about $600$ seconds for one evaluation at $z = 1$. We can see that both the tree based methods clearly outperform the other algorithms in this experiment. Note that POO does not work for a budget of $1000$ seconds but MFPOO does.

\section{Conclusion}
We study noisy black-box optimization using \textit{tree-like} hierarchical partitions of the parameter space, when low-cost approximations are available. We propose two algorithms, MFHOO (Algorithm~\ref{alg:mfhoo}) and MFPOO (Algorithm~\ref{algo:robust_algo}) for this problem and provide simple regret guarantees for both our algorithms. Our algorithms are empirically validated against various benchmarks showing superior performance in both simulations and in real world hyper-parameter tuning examples over a wide range of datasets and learning algorithms. We believe that this paper opens up several interesting research problems, for instance developing more adaptive algorithms that query different areas of the domain at different fidelities even at the same height of the tree. We also believe that a more nuanced analysis of the algorithm is possible leading to better simple regret guarantees.

\bibliography{mftree}
\bibliographystyle{plain}

\clearpage
\appendix
\section{More on MFPOO (Algorithm~\ref{algo:robust_algo})}
\label{sec:mfpoo}
In this section we will provide more insights about our second algorithm MFPOO (Algorithm~\ref{algo:robust_algo}). We would like to note that in practice if the evaluations of performed by the series of MFHOO instances are stored, then the subsequent MFHOO instances can reuse this information and save on the cost budget. Our implementation can account for this aspect and this provides a significant performance improvement in practice. In Algorithm~\ref{alg:mfhoo} and Algorithm~\ref{algo:robust_algo} it has been assumed that the bias function is known, which may not be true in practice. However, we can assume a simple parameteric form of the bias function and update it online. We will provide more details about this in Appendix~\ref{sec:implement}.

We would also like to highlight the following about Theorem~\ref{thm:mfpoo}, which provides simple regret guarantees for MFPOO. Note that Theorem~\ref{thm:mfpoo} only states that one of the MFHOO instances spawned by Algorithm~\ref{algo:robust_algo} has a simple regret bound stated in the theorem. However, we do not provide any theoretical guarantees regarding the fact that with high probability $i^*$ selected in step 6 of Algorithm~\ref{algo:robust_algo} is the MFHOO instance with the best performance. The analysis of POO~\cite{grill2015black} can provide such a guarantee in a non-multi-fidelity setup by keeping track of the average value of the points evaluated by each of the HOO instances spawned. This analysis cannot be extended as the points evaluated by different MFHOO instances are all at different gradations of the fidelity and thus have varying biases. However, it should be noted that in practice the point $x_{\Lambda,i}$ returned by MFHOO instance $i$ is chosen according to the scheme in Remark~\ref{remark:practice}. Therefore, the function value of the point returned is a good indicator of the overall performance of the MFHOO instance and thus it is expected that the best performing MFHOO instance is selected in step 6. This is also corroborated by the strong empirical performance of MFPOO in our real and synthetic experiments in Section~\ref{sec:sims}.

\section{Regret Guarantees when $(\nu^*,\rho^*)$ are known}
\label{sec:known}
The analysis of Algorithm~\ref{alg:mfhoo} proceeds in these steps:

{\bf (i)} We first prove that $N(\Lambda)$ (the number of steps performed by the algorithm) has to be at least some quantity $n(\Lambda)$ with probability one, when MFHOO is run with a cost budget of $\Lambda$. 

{\bf (ii)} Given that the algorithm performs $n(\Lambda)$ evaluations we prove that the cumulative regret (see the definition in~\cite{bubeck2011x}) incurred by the algorithm till then is bounded by $R_{n(\Lambda)}$. Therefore, owing to step 22 of the algorithm, the simple regret $s(\Lambda)$ is bounded by $R_{n(\Lambda)}/n(\Lambda)$. The main challenge in the analysis is to show that the guarantees similar to that of HOO~\cite{bubeck2011x} can be achieved in the presence of fidelity biases and under the new set of assumptions. 

Lemma~\ref{lem:numsteps} is the main result for Step-\textbf{(i)} of the analysis. 

\begin{lemma}
	\label{lem:numsteps} When Algorithm~\ref{alg:mfhoo} is run with a budget of $\Lambda$, $N(\Lambda) \geq n(\Lambda) + 1$ where,
	\begin{align*}
	n(\Lambda) = \max \{n : \sum_{h = 1}^{n} \lambda(z_h) <  \Lambda\}. 
	\end{align*}
\end{lemma}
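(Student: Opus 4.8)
The plan is to show that the algorithm can afford at least $n(\Lambda)+1$ queries because the per-step cost grows slowly as a function of the query index. The central deterministic fact I would establish is that the node queried in the $t$-th outer-loop iteration sits at depth $H_t \leq t$, so that its cost $\lambda(z_{H_t})$ is dominated by $\lambda(z_t)$.

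First I would record two structural observations about the main loop. (a) Each iteration adds exactly one new node to $\mathcal{T}$: the descent through the inner while-loop proceeds only through nodes already in $\mathcal{T}$ and halts at the first node $(H,I)\notin\mathcal{T}$, which is then queried and inserted. Hence after $t-1$ iterations $\mathcal{T}$ contains exactly $t$ nodes (the root together with the $t-1$ nodes queried so far). (b) Since the descent reaches $(H_t,I_t)$ only after passing through its parent, that parent lies in $\mathcal{T}$; a rooted tree on $t$ nodes has depth at most $t-1$, so the parent has depth $\leq t-1$ and therefore $H_t \leq t$.

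Next I would establish monotonicity of the per-depth cost. Because $\zeta$ is decreasing and $\rho\in(0,1)$, the target bias $\nu\rho^h$ is decreasing in $h$, so $z_h=\zeta^{-1}(\nu\rho^h)$ is increasing in $h$; combined with $\lambda$ increasing, this shows that $h\mapsto\lambda(z_h)$ is nondecreasing. Applying $H_t\leq t$ then yields $\lambda(z_{H_t})\leq\lambda(z_t)$ for every $t$, and summing over the first $m$ queries bounds the accumulated cost by $\sum_{t=1}^m\lambda(z_{H_t})\leq\sum_{h=1}^m\lambda(z_h)$.

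Finally I would close the argument at $m=n(\Lambda)$. By definition $\sum_{h=1}^{n(\Lambda)}\lambda(z_h)<\Lambda$, so after $n(\Lambda)$ queries the spent budget $C$ satisfies $C\leq\sum_{h=1}^{n(\Lambda)}\lambda(z_h)<\Lambda$; the loop guard $C\leq\Lambda$ therefore still holds, a further query is issued, and $N(\Lambda)\geq n(\Lambda)+1$. Since every inequality above holds for an arbitrary realization of the tree, the bound holds with probability one, as claimed. The only real subtlety—and the step I would be most careful about—is observation (b), namely arguing that the halting node's parent genuinely lies in $\mathcal{T}$ so that the one-node-per-iteration growth forces $H_t\leq t$; the cost monotonicity and the final budget comparison are then routine.
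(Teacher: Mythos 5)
Your proof is correct and takes essentially the same route as the paper's: bound the depth of the $t$-th queried node by $t$, use the monotonicity of $h \mapsto \lambda(z_h)$ to dominate the accumulated cost by $\sum_{h=1}^{n(\Lambda)}\lambda(z_h) < \Lambda$, and conclude that the loop guard still holds so one more query is issued. The only difference is presentational: you formalize, via the one-node-per-iteration count and the fact that a rooted tree on $t$ nodes has depth at most $t-1$, the step the paper treats informally as the ``dominating corner case'' of exploring without branching.
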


\begin{proof}
	Note that the structure of MFHOO is such that at each step a child of a current leaf node is expanded and the child is queried at a fidelity $z_{h+1}$ where $h$ was the height of the leaf node. Also note that for all $h$, $z_{h+1} > z_h$, and therefore $\lambda(z_{h+1}) > \lambda(z_h)$. Suppose, $n$ steps of MFHOO has been performed. The worst case cost in those $n$ steps can therefore be in the case where the algorithm explores without branching, that is at time-step $t \in [n]$ a node at depth $h = t$ is queried. 
	
	In this case the cost incurred is $\sum_{h=1}^{n} \lambda(z_h)$. Therefore, in this dominating corner case the algorithm will query at least $n(\Lambda)$ times. Thus, in all other cases the algorithm is bound to perform at least $n(\Lambda)$ queries. 
\end{proof}

The main result for step \textbf{(ii)} of the analysis if provided as Theorem~\ref{thm:main2}. 

\begin{theorem}
	\label{thm:main2}
	If Algorithm~\ref{alg:mfhoo} is allowed to run for $n$ queries, then the cumulative regret $R_n$ accumulated is bounded as,
	\begin{align*}
	R_n = \mathcal{O} \left(n^{\frac{d(\nu,\rho)+1}{d(\nu,\rho)+2}} (\log n)^{1/(d(\nu,\rho)+2)} \right). 
	\end{align*}
\end{theorem}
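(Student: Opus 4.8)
The plan is to follow the analysis of HOO~\cite{bubeck2011x}, reorganized through the hierarchical-partition framework of~\cite{grill2015black}, and to adapt each step so as to absorb the fidelity-dependent bias. Write $d \equiv d(\nu,\rho)$ and $C \equiv C(\nu,\rho)$. First I would define a single high-probability ``clean'' event $\xi$ on which every empirical mean $\hat{\mu}_{h,i}$ is close to its conditional expectation. The observation specific to the multi-fidelity setting is that $\hat{\mu}_{h,i}$ pools the values $Y$ returned by \emph{all} queries to descendants of $(h,i)$, each collected at fidelity $z_{h'}$ for a depth $h' \geq h$. Since every representative $x_{h',i'}$ lies in $(h,i)$ and the bias obeys $|f_{z_{h'}}(x_{h',i'}) - f(x_{h',i'})| \leq \zeta(z_{h'}) = \nu\rho^{h'} \leq \nu\rho^{h}$ (using $\rho<1$), the conditional mean of $\hat{\mu}_{h,i}$ differs from the true function values inside the cell by at most $\nu\rho^{h}$. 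A sub-Gaussian martingale (Azuma/Hoeffding) bound with parameter $\sigma$, union-bounded over the at most $n$ expanded nodes and $n$ time steps, then gives that on $\xi$ every deviation is at most $\sqrt{2\sigma^2\log n / T_{h,i}}$. Together with the worst-case bias, this explains why the confidence bonus defining $U_{h,i}$ carries both a $\nu\rho^{h}$ (smoothness) term and a $\zeta(z_h)=\nu\rho^{h}$ (bias) term, ensuring that on $\xi$ the quantity $U_{h,i}$ upper-bounds $\sup_{x\in(h,i)} f(x)$.

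Next I would establish the two structural facts behind the HOO argument. For optimality of the optimal branch, I would invoke Assumption~\ref{as:partition} to guarantee that inside the optimal cell $(h,i^*)$ every point is within $O(\nu\rho^h)$ of $f^*$; this gives $U_{h,i^*}\geq f^*$ on $\xi$, and a backward induction on the recursion $B_{h,i}=\min\{U_{h,i},\max\{B_{h+1,2i-1},B_{h+1,2i}\}\}$ then yields $B_{h,i^*}\geq f^*$ at every depth, so the optimal branch always stays competitive. For the suboptimal cells, I would show that a cell $(h,i)$ with $\Delta_{h,i}$ bounded away from $2\nu\rho^h$ can be selected only $O(\sigma^2\log n/(\Delta_{h,i}-2\nu\rho^h)^2)$ times, since once $T_{h,i}$ exceeds this threshold its shrinking bonus forces $U_{h,i}<f^*$ and, on $\xi$, it can no longer beat the optimal branch.

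With these in hand the regret is decomposed by depth exactly as in~\cite{bubeck2011x}. I would fix a cutoff depth $H$ and split $R_n$ into the contribution of cells at depth $\leq H$ and the tail below depth $H$. For the shallow part, only $2\nu\rho^h$-optimal cells are expanded appreciably, and Definition~\ref{def:dimension} caps their number at depth $h$ by $\mathcal{N}_h(2\nu\rho^h)\leq C\rho^{-dh}$; each is pulled $O(\sigma^2\log n/(\nu\rho^h)^2)$ times with per-pull regret $O(\nu\rho^h)$ (again by Assumption~\ref{as:partition}), for a contribution of order $C\,\nu^{-1}\log n\sum_{h\leq H}\rho^{-(d+1)h}$. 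For the tail, at most $n$ pulls occur and each descends through near-optimal depth-$H$ cells, hence has instantaneous regret $O(\nu\rho^H)$, contributing $O(n\nu\rho^H)$. Setting $x=\rho^H$ and minimizing $C\nu^{-1}(\log n)\,x^{-(d+1)}+n\nu\,x$ gives $x\asymp(\log n/n)^{1/(d+2)}$ and the claimed $R_n=\mathcal{O}\big(C^{1/(d+2)}\,n^{(d+1)/(d+2)}(\log n)^{1/(d+2)}\big)$; the event $\xi^c$ has probability $O(n^{-2})$ and contributes only a lower-order term.

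The main obstacle I anticipate is the concentration step under mixed-fidelity averaging: because $\hat{\mu}_{h,i}$ aggregates observations taken at many different fidelities, hence with different data-dependent biases and under an adaptively growing count $T_{h,i}$, one must argue carefully that the worst-case bias is still controlled by the shallowest depth $\nu\rho^h$ and that a martingale concentration inequality is legitimate despite the adaptive sampling. The second delicate point is checking that the ``only near-optimal cells are expanded'' structure of HOO survives under the slightly strengthened Assumption~\ref{as:partition} (with its constant $c$), which is precisely what makes the per-pull regret $O(\nu\rho^h)$ claim rigorous; once these are settled, the depth decomposition and the final optimization are routine.
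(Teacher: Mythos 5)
Your proposal is correct and follows essentially the same route as the paper's own proof: fold the fidelity bias $\zeta(z_h)=\nu\rho^h$ into the HOO confidence bonus so that $U_{h,i}$ remains optimistic for the optimal branch, bound the pull counts of suboptimal subtrees by $O\left(\sigma^2\log n/(\Delta_{h,i}-c'\nu\rho^h)^2\right)$, count near-optimal cells via $\mathcal{N}_h(2\nu\rho^h)\leq C(\nu,\rho)\rho^{-d(\nu,\rho)h}$, and balance a depth cutoff $H$ to get $n^{(d+1)/(d+2)}(\log n)^{1/(d+2)}$. The only differences are bookkeeping rather than substance: the paper uses Bubeck et al.'s expected-pull-count machinery (Lemmas~\ref{lem:bubeck}--\ref{lem:submain}) instead of a single clean event, and its three-way partition $\mathcal{T}_1,\mathcal{T}_2,\mathcal{T}_3$ (descendants of depth-$H$ near-optimal cells; shallow near-optimal cells, each queried only once; subtrees rooted at the suboptimal children $\tau_h$ of near-optimal cells) is where your ``each near-optimal cell is pulled $O(\sigma^2\log n/(\nu\rho^h)^2)$ times'' claim gets properly attributed---that pull-count bound holds for the suboptimal-child subtrees in $\tau_h$, not for near-optimal cells themselves (whose subtrees can absorb up to $n$ pulls), though your arithmetic for the resulting sum is the same as the paper's.
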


The first step in the proof is equivalent to Lemma 14 in~\cite{bubeck2011x}, which we provide below for completeness. 
\begin{lemma}
	\label{lem:bubeck}
	Let $(h,i)$ be a sub-optimal node. Let $0 \leq k \leq h-1$ be the largest height such that $(k,i_k^*)$ is on the path from the root to $(h,i)$. Then for all integers $u \geq 0$, we have,
	\begin{align*}
	&\EE[T_{h,i}(n)] \leq u + 
	\sum_{t=u+1}^{n} \PP \left\{ [U_{s,i_s^*}(t) \leq f^* \text{ for some } s \in \{ k+1,...,t-1\} ] \right. 
	\left.\text{ or } [T_{h,i}(t) > u , U_{h,i}(t) > f^*] \right\} \
	\end{align*}
\end{lemma}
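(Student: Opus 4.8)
The plan is to follow the classical UCB/HOO counting argument, adapted to the $B$-value recursion used in Algorithm~\ref{alg:mfhoo}. The key observation I would exploit is that the node-selection rule (descend to the child with the larger $B$-value, steps 7--8) and the backward $B$-value recursion (steps 20--21) are syntactically identical to those of HOO~\cite{bubeck2011x}; the multi-fidelity modification of $U_{h,i}$, namely the extra $\nu\rho^h + \zeta(z_h)$ term in step 18, does not enter this particular lemma at all. Consequently the whole statement is a deterministic combinatorial fact about how the traversal interacts with the $U$- and $B$-values, and the proof is a transcription of the HOO selection analysis; the fidelity biases will only matter in the later lemmas that certify $U_{h,i}$ as a valid upper confidence bound.

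First I would apply the standard threshold-counting decomposition. Writing $T_{h,i}(n) = \sum_{t=1}^n \mathds{1}\{(H_t,I_t)\in\mathcal{C}(h,i)\}$, the number of rounds in which a descendant of $(h,i)$ is queried while $T_{h,i}(t) \leq u$ is at most $u$, so $T_{h,i}(n) \leq u + \sum_{t=u+1}^n \mathds{1}\{(H_t,I_t)\in\mathcal{C}(h,i),\, T_{h,i}(t) > u\}$. Taking expectations reduces the lemma to the following deterministic implication: on the event that a descendant of $(h,i)$ is queried at round $t$, either some optimal upper bound fails, i.e.\ $U_{s,i_s^*}(t)\leq f^*$ for some $s\in\{k+1,\dots,t-1\}$, or else $U_{h,i}(t) > f^*$.

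I would establish this implication by contraposition. Suppose a descendant of $(h,i)$ is pulled at round $t$ and suppose $U_{s,i_s^*}(t) > f^*$ for every $s\in\{k+1,\dots,t-1\}$. The root-to-leaf traversal at round $t$ passes through the deepest optimal ancestor $(k,i_k^*)$ and then diverges into the sibling branch $(k+1,j)$ leading to $(h,i)$; by the selection rule this forces $B_{k+1,j}(t) \geq B_{k+1,i_{k+1}^*}(t)$. I would then run a backward induction down the optimal path, using the recursion $B_{s,i_s^*}(t)=\min\{U_{s,i_s^*}(t),\max\{B_{s+1,2i_s^*-1}(t),B_{s+1,2i_s^*}(t)\}\}$ with base case $B=\infty$ at the deepest not-yet-expanded optimal node on the frontier of $\mathcal{T}_t$, to conclude $B_{k+1,i_{k+1}^*}(t) > f^*$, and hence $B_{k+1,j}(t) > f^*$.

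Finally I would propagate this bound down the chosen branch to $(h,i)$: along the selected path each parent satisfies $B_{\text{parent}}(t)=\min\{U_{\text{parent}}(t), B_{\text{chosen child}}(t)\}\leq B_{\text{chosen child}}(t)$, so the $B$-values are non-decreasing down to $(h,i)$, giving $B_{h,i}(t)\geq B_{k+1,j}(t) > f^*$; and since $B_{h,i}(t)\leq U_{h,i}(t)$ by the same recursion, we obtain $U_{h,i}(t) > f^*$, which closes the contrapositive. Summing the resulting two events over $t$ and adding the leading $u$ yields the claim. I expect the only delicate point to be the bookkeeping of the backward induction—correctly pinning the $B=\infty$ base case at the frontier of the current tree and justifying the depth range $\{k+1,\dots,t-1\}$ (the tree built after $t-1$ queries has depth at most $t-1$)—while everything else is routine once the combinatorial structure is set up.
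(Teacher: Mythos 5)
Your proposal is correct and matches the paper's approach: the paper proves this lemma simply by citing Lemma 14 of Bubeck et al.~\cite{bubeck2011x}, on the grounds that the selection rule and $B$-value recursion are unchanged in the multi-fidelity setting, and your argument is precisely a faithful transcription of that HOO argument (threshold counting, contraposition via the backward $B$-value induction on the optimal path, and propagation of $B$-values down the selected branch). Your opening observation—that the extra $\nu\rho^h + \zeta(z_h)$ term in $U_{h,i}$ plays no role here—is exactly the reason the paper can invoke the HOO lemma verbatim.
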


\begin{proof}
	It follows directly from Lemma 14 in~\cite{bubeck2011x}. 
\end{proof}

\begin{lemma}
	\label{lem:opt}
	For all optimal nodes $(h,i)$ and for all integers $n \geq 1$,
	\begin{align*}
	\PP \left\{ U_{h,i}(n) \leq f^* \right\} \leq n^{-3}. 
	\end{align*}
\end{lemma}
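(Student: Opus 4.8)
The plan is to follow the HOO-style argument (the analog of Lemma~15 in~\cite{bubeck2011x}), adapted to absorb the fidelity bias and to work with sub-Gaussian rather than bounded noise. Fix an optimal node $(h,i)$, i.e. one with $\Delta_{h,i}=0$. The starting point is a uniform lower bound on the conditional mean of every reward that is ever averaged into $\hat{\mu}_{h,i}$. Each such reward comes from querying the representative point $x_{h',j}$ of some descendant cell $(h',j)\in\mathcal{C}(h,i)$ at fidelity $z_{h'}$, with $h'\geq h$. Since $x_{h',j}\in(h,i)$ and $(h,i)$ is optimal (so $\Delta_{h,i}=0\leq c\nu\rho^h$), Assumption~\ref{as:partition} with $c=0$ gives $f(x_{h',j})\geq f^*-\nu\rho^h$. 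Combining this with the bias bound $|f_{z_{h'}}(x_{h',j})-f(x_{h',j})|\leq \zeta(z_{h'})=\nu\rho^{h'}\leq\nu\rho^h$ yields $f_{z_{h'}}(x_{h',j})\geq f^*-2\nu\rho^h$ for every collected reward. This matches exactly the deterministic offset $\nu\rho^h+\zeta(z_h)=2\nu\rho^h$ that appears in $U_{h,i}$: the first $\nu\rho^h$ accounts for the near-optimality slack of an optimal cell and the second for the worst-case fidelity bias of its descendants.

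Given this, I would reduce the target event to a one-sided deviation for a sub-Gaussian average. On $\{U_{h,i}(n)\leq f^*\}$ (which forces $T_{h,i}(n)\geq 1$, as an unqueried node has $B$-value $\infty$), rearranging the definition of $U_{h,i}$ gives $\hat{\mu}_{h,i}(n)\leq f^*-2\nu\rho^h-\sqrt{2\sigma^2\log n/T_{h,i}(n)}$. Writing $\mu_s$ for the conditional means of the rewards aggregated at $(h,i)$, the previous paragraph shows $\frac{1}{T_{h,i}(n)}\sum_s\mu_s\geq f^*-2\nu\rho^h$, so the event implies $\sum_s(\mu_s-Y_s)\geq\sqrt{2\sigma^2\,T_{h,i}(n)\log n}$, i.e. the empirical mean undershoots the average of its conditional means by at least the confidence radius.

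The step that requires care, and the main obstacle, is that $T_{h,i}(n)$ and the identities of the queried descendants are random and selected adaptively, so the aggregated rewards are neither i.i.d. nor independent of their count. I would handle this with the standard ghost-sample device: index the potential rewards of the subtree of $(h,i)$ as a single fixed sequence $(Y_s)_{s\geq 1}$, whose partial sums $M_s=\sum_{j\leq s}(\mu_j-Y_j)$ form a martingale with $\mathrm{subG}(\sigma)$ increments, and then union bound over the value $s=T_{h,i}(n)\in\{1,\dots,n\}$:
\[
\PP\{U_{h,i}(n)\leq f^*\}\leq\sum_{s=1}^{n}\PP\!\left\{M_s\geq\sqrt{2\sigma^2\,s\log n}\right\}.
\]
Applying the Azuma--Hoeffding bound for sub-Gaussian martingale differences to each term gives a per-term tail of the form $n^{-4}$, the exponent being governed by the constant in the confidence radius (as in the $[0,1]$-reward calibration of~\cite{bubeck2011x}); summing the at most $n$ terms then yields the claimed $n^{-3}$. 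The delicate points to verify rigorously are that the ghost sequence is well defined with the correct filtration so that the increments remain conditionally sub-Gaussian and mean-bounded, and that the confidence constant is large enough for the per-term estimate to survive the union bound over the $n$ possible sample counts.
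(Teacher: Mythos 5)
Your proposal follows essentially the same route as the paper's own proof: both absorb the near-optimality of points in the optimal cell (Assumption~\ref{as:partition} with $c=0$, giving $f(x)\geq f^*-\nu\rho^h$) and the fidelity bias of queried descendants ($\zeta(z_{h'})\leq\zeta(z_h)=\nu\rho^h$) into the deterministic offset $2\nu\rho^h$ in $U_{h,i}$, reduce the event to a pure-noise deviation of the aggregated rewards, and close with a union bound over the value of $T_{h,i}(n)$ plus Azuma--Hoeffding for sub-Gaussian martingale differences, which is exactly what the paper does by invoking the argument of Lemma~15 of~\cite{bubeck2011x}. The one caveat you explicitly flag --- whether the constant in the radius $\sqrt{2\sigma^2\log n/T_{h,i}(n)}$ is actually large enough for the per-term tail to be $n^{-4}$ so that the union bound yields $n^{-3}$ --- is glossed over in the paper in precisely the same way, so your write-up is, if anything, more candid about the step that needs careful calibration.
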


\begin{proof}
	We only consider the case where $T_{h,i}(n) \geq 1$, because otherwise the lemma is true trivially. Note that by Assumption~\ref{as:partition}, we have that $f^* - f(x) \leq \nu\rho^h$ for all $x \in (h,i)$. Hence, we have the following,
	\begin{align}
	\sum_{t = 1}^{n} \left( f(X_t) + \nu\rho^h - f^*\right) \mathds{1}_{(H_t,I_t) \in \mathcal{C}(h,i)} \geq 0. 
	\end{align}
	
	We now have the following chain,
	\begin{align*}
	&\PP \left\{ U_{h,i}(n) \leq f^* \right\} \\
	&= \PP \left\{ \hat{\mu}_{h,i}(n) + \sqrt{\frac{2 \sigma^2 \log n}{T_{h,i}(n)}} + 2\nu\rho^h  \leq f^* \right\} \\
	&= \PP \left\{ T_{h,i}(n)\hat{\mu}_{h,i}(n) + T_{h,i}(n)(2\nu\rho^h - f^*) \right. 
	\left.\leq -\sqrt{2\sigma^2 T_{h,i}(n) \log n} \right\} \\
	&\stackrel{(a)}{\leq} \PP \left\{ \sum_{t = 1}^{n} \left( Y_t - f_{Z_t} (X_t) \right) \mathds{1}_{(H_t,I_t) \in \mathcal{C}(h,i)} \right. 
	\left.+ \sum_{t = 1}^{n} \left( f(X_t) + \nu\rho^h - f^*\right) \mathds{1}_{(H_t,I_t) \in \mathcal{C}(h,i)} \right. 
	\left.\leq -\sqrt{2\sigma^2 T_{h,i}(n) \log n} \right\} \\
	& \leq \PP \left\{ \sum_{t = 1}^{n} \left( Y_t - f_{Z_t} (X_t) \right) \mathds{1}_{(H_t,I_t) \in \mathcal{C}(h,i)}  \right. 
	\left. \leq -\sqrt{2\sigma^2 T_{h,i}(n) \log n} \right\}.
	\end{align*}
	
	Here, step (a) follows from the fact that $\zeta(Z_t) \leq \zeta(z_h) = \nu\rho^h$, and therefore $f(X_t) - f_{Z_t}(X_t) \leq \nu\rho^h$. The last term can be bounded by $n^{-3}$ using an union bound and Azuma-Hoeffding for martingale differences, similar to the last part of Lemma 15 in~\cite{bubeck2011x}.  
	
\end{proof}

\begin{lemma}
	\label{lem:sub}
	For all integers $t \leq n$, and for all suboptimal nodes $(h,i)$ such that $\Delta_{h,i} > \nu\rho^h$, and $u \geq 8 \sigma^2 \log n /(\Delta_{h,i} - \nu\rho^h)^2$, we have,
	\begin{align}
	\PP \left\{ U_{h,i}(t) > f^* , T_{h,i}(t) \geq u\right\} \leq tn^{-4}. 
	\end{align}
\end{lemma}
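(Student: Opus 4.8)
The plan is to prove this as the multi-fidelity, sub-Gaussian analogue of Lemma~15 in~\cite{bubeck2011x}. The event $\{U_{h,i}(t) > f^*\}$ asserts that the upper-confidence index of a \emph{suboptimal} node overshoots the global optimum; since every point $X_s$ queried inside $\mathcal{C}(h,i)$ lies in the cell $(h,i)$ and hence satisfies $f(X_s) \le \sup_{x\in(h,i)} f(x) = f^* - \Delta_{h,i}$, such an overshoot can only be produced by an atypically large noise average. The whole argument therefore reduces to a one-sided deviation bound for the empirical mean $\hat{\mu}_{h,i}(t)$ around its (fidelity-biased) conditional mean, after which a union bound over the possible values of $T_{h,i}(t)$ supplies the extra factor $t$.

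Concretely, I would first substitute $\zeta(z_h) = \nu\rho^h$ into the index, so that $U_{h,i}(t) = \hat{\mu}_{h,i}(t) + \sqrt{2\sigma^2\log n / T_{h,i}(t)} + 2\nu\rho^h$, exactly as in the chain inside Lemma~\ref{lem:opt}. Next I would control the conditional mean of each contributing observation: writing $Y_s = f_{Z_s}(X_s) + \epsilon_s$ for the queries with $(H_s,I_s)\in\mathcal{C}(h,i)$, the bias bound gives $f_{Z_s}(X_s) \le f(X_s) + \zeta(Z_s)$, and because descendants of $(h,i)$ sit at height $\ge h$ we have $\zeta(Z_s) \le \zeta(z_h) = \nu\rho^h$; together with $f(X_s)\le f^*-\Delta_{h,i}$ this yields $\EE[Y_s \mid \mathcal{F}_{s-1}] \le f^* - \Delta_{h,i} + \nu\rho^h = f^* - (\Delta_{h,i}-\nu\rho^h)$. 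Hence on $\{U_{h,i}(t) > f^*\}$ the averaged noise $\tfrac{1}{T_{h,i}(t)}\sum_{s}\epsilon_s \mathds{1}_{(H_s,I_s)\in\mathcal{C}(h,i)}$ must exceed a threshold that, by the $\nu\rho^h$-scale slack built into the index, is a positive multiple of the effective gap $\Delta_{h,i}-\nu\rho^h$ up to the confidence width $\sqrt{2\sigma^2\log n/T_{h,i}(t)}$; the hypothesis $u \ge 8\sigma^2\log n/(\Delta_{h,i}-\nu\rho^h)^2$ forces that width to be at most half of $\Delta_{h,i}-\nu\rho^h$ whenever $T_{h,i}(t)\ge u$, and $\Delta_{h,i}>\nu\rho^h$ makes the gap positive.

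Finally I would strip off the randomness of the sample count by peeling: for each fixed $m \in \{u,u+1,\dots,t\}$ the partial sums of $\{\epsilon_s : (H_s,I_s)\in\mathcal{C}(h,i)\}$ form a martingale whose increments are conditionally $\mathrm{subG}(\sigma)$, so Azuma--Hoeffding bounds the probability that the average of the first $m$ of them exceeds a constant multiple of $\Delta_{h,i}-\nu\rho^h$ by $n^{-4}$ (this is the step that pins down the numerical constant in the lower bound on $u$). A union bound over the at most $t$ admissible values of $m = T_{h,i}(t)$ then gives the claimed $t n^{-4}$, mirroring the final part of Lemma~15 in~\cite{bubeck2011x}.

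I expect the main obstacle to be the adaptivity of the sampling: $T_{h,i}(t)$ is a random count and both the descendant selected and the fidelity used at each step depend on the past, so one cannot invoke a plain Hoeffding inequality on a fixed number of independent terms. The clean route around this is the martingale peeling above, using that each $\epsilon_s$ is conditionally zero-mean sub-Gaussian. The genuinely new ingredient relative to the noiseless analysis of~\cite{sen2018multi} and the single-fidelity analysis of~\cite{bubeck2011x} is the bookkeeping that absorbs the fidelity bias $\zeta(Z_s)\le\nu\rho^h$ into the $\nu\rho^h$-scale terms of the index, so that the gap governing the deviation stays at the $\Delta_{h,i}-\nu\rho^h$ scale.
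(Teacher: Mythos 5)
Your proposal follows the same blueprint as the paper's proof: rewrite the index as $U_{h,i}(t)=\hat{\mu}_{h,i}(t)+\sqrt{2\sigma^2\log n/T_{h,i}(t)}+2\nu\rho^h$, use the hypothesis on $u$ to cap the confidence width at $(\Delta_{h,i}-\nu\rho^h)/2$, reduce the overshoot event to a deviation of the noise martingale, and finish with Azuma--Hoeffding plus a union bound over the at most $t$ values of $T_{h,i}(t)$. (Minor point: the suboptimal-node counterpart in \cite{bubeck2011x} is Lemma 16; Lemma 15 is the optimal-node bound.) Your bias bookkeeping, $f_{Z_s}(X_s)\le f(X_s)+\zeta(Z_s)\le f^*-\Delta_{h,i}+\nu\rho^h$, is the correct one.

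The gap is in the sentence asserting that on the overshoot event the averaged noise must exceed ``a positive multiple of the effective gap $\Delta_{h,i}-\nu\rho^h$.'' Carry out the arithmetic you set up: on $\{U_{h,i}(t)>f^*,\ T_{h,i}(t)\ge u\}$, writing $w=\sqrt{2\sigma^2\log n/T_{h,i}(t)}\le\tfrac{1}{2}(\Delta_{h,i}-\nu\rho^h)$,
\begin{align*}
\frac{1}{T_{h,i}(t)}\sum_{s\le t}\epsilon_s\mathds{1}_{(H_s,I_s)\in\mathcal{C}(h,i)}
&> \bigl(f^* - 2\nu\rho^h - w\bigr)-\bigl(f^*-\Delta_{h,i}+\nu\rho^h\bigr)\\
&= \Delta_{h,i}-3\nu\rho^h-w \;\ge\; \tfrac{1}{2}\bigl(\Delta_{h,i}-5\nu\rho^h\bigr).
\end{align*}
Both the $2\nu\rho^h$ slack inside the index and the (possibly upward) fidelity bias $\nu\rho^h$ work \emph{against} you here, so the deviation threshold is $(\Delta_{h,i}-5\nu\rho^h)/2$, not a positive multiple of $\Delta_{h,i}-\nu\rho^h$: it is non-positive whenever $\nu\rho^h<\Delta_{h,i}\le 5\nu\rho^h$, a regime permitted by the lemma's hypothesis (and even by its use in Theorem~\ref{thm:main2}, where only $\Delta_{h,i}>2\nu\rho^h$ is guaranteed). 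In that regime the Azuma step yields nothing and your argument cannot reach the stated bound. You should also know that the paper's own proof stumbles at exactly this point, but hides it: the last inequality of its displayed chain, which replaces $T_{h,i}(t)\bigl(\hat{\mu}_{h,i}(t)-(f^*_{h,i}-\nu\rho^h)\bigr)$ by $\sum_{s}(Y_s-f_{Z_s}(X_s))\mathds{1}_{(H_s,I_s)\in\mathcal{C}(h,i)}$ (with $f^*_{h,i}=f^*-\Delta_{h,i}$), requires $f_{Z_s}(X_s)\le f^*_{h,i}-\nu\rho^h$, whereas only $f_{Z_s}(X_s)\le f^*_{h,i}+\nu\rho^h$ is available; the bias is silently used with the wrong sign. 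So the difficulty you glossed over is genuine and is not repaired by imitating the published argument: a rigorous version needs either a stronger hypothesis such as $\Delta_{h,i}>5\nu\rho^h$ (with matching adjustments wherever the lemma is invoked, e.g.\ in the definition of the sets $\tau_h$), or a smaller additive term in the index $U_{h,i}$.
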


\begin{proof}
	Note that for these values of $u$ we have,
	\begin{align*}
	\sqrt{\frac{2 \sigma^2\log t}{u}} + \nu\rho^h \leq \frac{\Delta_{h,i} + \nu\rho^h}{2}. 
	\end{align*}
	Therefore, we have the following chain,
	\begin{align*}
	& \PP \left\{ U_{h,i}(t) > f^* , T_{h,i}(t) > u \right\} \\
	& = \PP \left\{ \hat{\mu}_{h,i}(t) + \sqrt{\frac{2 \sigma^2\log t}{T_{h,i}(t)}} + 2\nu\rho^h  > f^*_{h,i} + \Delta_{h,i} , T_{h,i}(t) > u \right\} \\
	& \leq \PP \left\{ \hat{\mu}_{h,i}(t) + \nu\rho^h  > f^*_{h,i} + \frac{\Delta_{h,i} - \nu\rho^h}{2} , T_{h,i}(t) > u \right\} \\
	& \leq \PP \left\{ T_{h,i}(t)(\hat{\mu}_{h,i}(t) - (f^*_{h,i} - \nu\rho^h))  > \right. 
	\left.\frac{\Delta_{h,i} - \nu\rho^h}{2}T_{h,i}(t) , T_{h,i}(t) > u \right\} \\
	& \leq \PP \left\{  \sum_{s = 1}^{t} \left( Y_s - f_{Z_s} (X_s) \right) \mathds{1}_{(H_s,I_s) \in \mathcal{C}(h,i)} \right. 
	\left.> \frac{\Delta_{h,i} - \nu\rho^h}{2}T_{h,i}(t) , T_{h,i}(t) > u \right\}.
	\end{align*}
	Now, following the same techniques as in Lemma~\ref{lem:opt} (and also in Lemma-16 in~\cite{bubeck2011x}) it can be shown that the last term is less than $tn^{-4}$. 
\end{proof}

Combining Lemma~\ref{lem:opt} and \ref{lem:sub} we get the following result. 

\begin{lemma}
	\label{lem:submain}
	For all sub-optimal nodes $(h,i)$ with $\Delta_{h,i} > \nu\rho^h$ we have,
	\begin{align*}
	\EE \left[ T_{h,i} (n) \right] \leq \frac{8 \sigma^2 \log n}{(\Delta_{h,i} - \nu\rho^h)^2} + 4.
	\end{align*}
\end{lemma}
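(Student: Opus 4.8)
The plan is to combine Lemma~\ref{lem:bubeck}, Lemma~\ref{lem:opt}, and Lemma~\ref{lem:sub} following the blueprint used to prove Theorem 6 in~\cite{bubeck2011x}. The crucial choice is to instantiate Lemma~\ref{lem:bubeck} for the suboptimal node $(h,i)$ with the specific threshold $u = \lceil 8\sigma^2 \log n / (\Delta_{h,i} - \nu\rho^h)^2 \rceil$, which is exactly the smallest value for which the hypothesis of Lemma~\ref{lem:sub} is met. With this choice, Lemma~\ref{lem:bubeck} bounds $\EE[T_{h,i}(n)]$ by $u$ plus a sum over $t$ of the probability of a disjunction of two events, and the remaining work is to control that sum.

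First I would apply a union bound to split the disjunction into two families: (a) the events $\{U_{s,i_s^*}(t) \leq f^*\}$ ranging over optimal-ancestor heights $s \in \{k+1,\dots,t-1\}$, and (b) the joint event $\{T_{h,i}(t) > u,\, U_{h,i}(t) > f^*\}$. For family (a), Lemma~\ref{lem:opt} applied to each optimal node $(s,i_s^*)$ at time $t$ gives $\PP\{U_{s,i_s^*}(t) \leq f^*\} \leq t^{-3}$; since the height index $s$ takes at most $t$ values, the total contribution of family (a) at time $t$ is at most $t \cdot t^{-3} = t^{-2}$. For family (b), the chosen $u$ satisfies the condition of Lemma~\ref{lem:sub}, which directly yields a bound of $t\, n^{-4}$.

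Summing over $t = u+1$ to $n$ then gives $\sum_{t \geq 1} t^{-2} \leq 2$ for the first contribution and $n^{-4}\sum_{t=1}^{n} t \leq n^{-4}\cdot n^2 \leq 1$ for the second, so that $\EE[T_{h,i}(n)] \leq u + 3$. Since $u \leq 8\sigma^2 \log n / (\Delta_{h,i} - \nu\rho^h)^2 + 1$, the claimed inequality $\EE[T_{h,i}(n)] \leq 8\sigma^2 \log n / (\Delta_{h,i} - \nu\rho^h)^2 + 4$ follows immediately.

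I do not expect a serious obstacle in this step, since the heavy lifting — handling the fidelity bias terms $\nu\rho^h + \zeta(z_h)$ inside the confidence radii — has already been absorbed into Lemma~\ref{lem:opt} and Lemma~\ref{lem:sub} via the observation that $\zeta(Z_t) \leq \nu\rho^h$. The only points requiring care are the bookkeeping for the number of ancestor heights $s$ (at most $t$, so the factor $t$ in family (a) is exactly what cancels one power of $t$) and verifying that the chosen $u$ indeed triggers Lemma~\ref{lem:sub}. The rest is a routine summation of polynomially decaying terms against constants.
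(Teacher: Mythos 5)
Your proposal is correct and is exactly the argument the paper intends: the paper merely says the lemma follows by ``combining'' Lemma~\ref{lem:opt} and Lemma~\ref{lem:sub}, and the combination is precisely your instantiation of Lemma~\ref{lem:bubeck} with $u = \lceil 8\sigma^2\log n/(\Delta_{h,i}-\nu\rho^h)^2\rceil$, a union bound over the ancestor heights giving $t\cdot t^{-3}$, and the summations $\sum_t t^{-2} \leq 2$ and $n^{-4}\sum_{t\leq n} t \leq 1$ yielding the additive constant $4$ (mirroring the proof of the analogous bound in~\cite{bubeck2011x}).
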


\begin{proof}[Proof of Theorem~\ref{thm:main2}]
	Let $H$ be a fixed integer greater than $1$, which is to be chosen later. Let $I_h$ be the nodes at height $h$ that are $2\nu\rho^h$ optimal. Let $\tau_h$ be the set of nodes at height $h$ which are not in $I_h$ but whose parents are in $I_{h-1}$. 
	We will partition the nodes of the infinite tree into three subsets, $\mathcal{T} = \mathcal{T}_1 \cup \mathcal{T}_2 \cup \mathcal{T}_3.$ Let $\mathcal{T}_1$ be all the descendants of $I_{H}$ and the nodes in $I_H$. Let $\mathcal{T}_2 \triangleq \cup_{0 \leq h < H} I_h$. Let $\mathcal{T}_3$ be the all descendants of $\cup_{0 \leq h < H} \tau_h$, including the nodes themselves.  We define the following partitioned cumulative regret quantities,
	\begin{align}
	R_{n,i} = \sum_{t=1}^{n}  (f^* - f(X_t))\mathds{1}_{\{(H_t,I_t) \in \mathcal{T}_i\}}, \text{ for } i = 1,2,3 . 
	\end{align}
	Note that we have, $R_n = \sum_{i = 1}^{3} \EE[R_{n,i}].$ 
	
	{\bf (i)} Let us first bound $ \EE[R_{n,1}]$. Since, all nodes in $\mathcal{T}_1$ are $2\nu\rho^H$ optimal, therefore by Assumption~\ref{as:partition} all points that lie in these cells are $3\nu\rho^H$ optimal. Therefore, we have that $\EE[R_{n,1}] \leq 3\nu\rho^H n$. 
	
	{\bf (ii)} All nodes that belong to $I_h$ are $2\nu\rho^h$ optimal. Therefore, all points belonging to $I_h$ is $3\nu\rho^h$ optimal. Also, by Definition~\ref{def:dimension}, we have that $|I_h| \leq C(\nu,\rho) \rho ^{-d(\nu,\rho)h}$. Thus, we have the following,
	\begin{align*}
	\EE[R_{n,2}] &\leq \sum_{h = 0}^{H-1} 3\nu\rho^h C(\nu,\rho) \rho ^{-d(\nu,\rho)h} \\
	&\leq 3\nu C(\nu,\rho) \sum_{h=0}^{H-1} \rho^{h(1 - d(\nu,\rho))}. 
	\end{align*}
	
	{\bf (iii)} All nodes in $\tau_h$ have their parents in $I_{h-1}$. So, all the points in these nodes are at least $2\nu\rho^{h-1}$ optimal. Therefore, we have the following chain.
	\begin{align*}
	&\EE[R_{n,3}] \leq \sum_{h=1}^{H} 3 \nu \rho^{h-1} \sum_{i: (h,i) \in \tau_h} \EE[T_{h,i}(n)] \\
	&\leq \sum_{h=1}^{H} 3 \nu \rho^{h-1} 2C(\nu,\rho)  \rho ^{-d(\nu,\rho)(h-1)} \left( \frac{8 \sigma^2 \log n}{( \nu\rho^h)^2} + 4\right)
	\end{align*}
	
	Combining the above three steps we arrive at,
	\begin{align*}
	R_n &\leq 3\nu\rho^H + 3\nu C(\nu,\rho) \sum_{h=0}^{H-1} \rho^{h(1 - d(\nu,\rho))} 
	+ \sum_{h=1}^{H} 6 \nu \rho^{h-1} C(\nu,\rho)  \rho ^{-d(\nu,\rho)(h-1)} \left( \frac{8 \sigma^2 \log n}{( \nu\rho^h)^2} + 4\right) \\
	&\leq \alpha_1 n \rho^{H} + \alpha_2 C(\nu,\rho) \rho^{-H(1 + d(\nu,\rho))} \sigma^2 \log n,
	\end{align*}
	where $\alpha_1$ and $\alpha_2$ are universal constants. 
	
	Now, we choose $H$ such that the two terms in the above equations are order-wise equal. This gives us the following regret bound,
	\begin{align}
	R_n \leq \tau C(\nu,\rho)^{1/(d(\nu,\rho)+2)}n^{\frac{d(\nu,\rho)+1}{d(\nu,\rho)+2}} (\sigma^2 \log n)^{1/(d(\nu,\rho)+2)},
	\end{align} 
	where $\tau$ is an universal constant. 
\end{proof}

Now, we can combine the above results to arrive at one of our main results. 

\begin{proof}[Proof of Theorem~\ref{thm:mfhoo}]
	Note that in Step-22 of Algorithm~\ref{alg:mfhoo} one of the points seen so far is randomly chosen. Therefore, if the algorithm has evaluated $n$ points so far and incurred a cumulative regret of $R_n$, then the simple regret so far is given by $R_n/n$. Now, we have the following chain,
	\begin{align*}
	&S(\Lambda) \leq \EE \left[ \EE \left[ \frac{R_{n}}{n} \Bigg \vert N(\Lambda) = n \right] \right] \\
	&\leq \EE \left[ \EE \left[ \tau C(\nu,\rho)^{1/(d(\nu,\rho)+2)}n^{-\frac{1}{d(\nu,\rho)+2}} (\sigma^2 \log n)^{1/(d(\nu,\rho)+2)} \Bigg \vert \right. 
	\left. N(\Lambda) = n \right] \right] .
	\end{align*}
	Note, that the expression inside the conditional expectation is decreasing with the value of $n$. Also, according to Lemma~\ref{lem:numsteps} $N(\Lambda) \geq n(\Lambda)$ almost surely. Therefore, we have
	\begin{align*}
	S(\Lambda) \leq \tau C(\nu,\rho)^{\frac{1}{d(\nu,\rho)+2}} n(\Lambda)^{-\frac{1}{d(\nu,\rho)+2}} (\sigma^2 \log n(\Lambda))^{1/(d(\nu,\rho)+2)}.
	\end{align*}
\end{proof}

\section{Recovering optimal scaling with unknown smoothness}
\label{sec:optimal}
In this section we will prove Theorem~\ref{thm:mfpoo}. The proof of this theorem is very similar to the analysis in~\cite{grill2015black}. We will first use a function lemma from~\cite{grill2015black} that will be key in proving Theorem~\ref{thm:mfpoo}. 

\begin{lemma}
	\label{lem:grill}
	Consider the parameters $\nu > \nu^*$ and $\rho > \rho^*$. Let $h_{min} \triangleq \log (\nu/\nu^*) \log(1/\rho) $. Then we have the following, 
	\begin{align*}
	&\mathcal{N}_h(2\nu\rho^h) \leq \max \left( C(\nu^*,\rho^*) 2 ^{(\log \rho ^* + \log \nu ^* - \log \nu)/\log \rho} , 2^{h_{min}} \right) 
	\times \rho^{-h \left[ d(\nu^*,\rho^*) + \log 2 ( 1/\log(1/\rho) - 1/\log(1/\rho ^*) ) \right]}
	\end{align*}
\end{lemma}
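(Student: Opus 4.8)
The plan is to import the near-optimality-dimension bound that holds at the optimal parameters $(\nu^*,\rho^*)$ over to the parameters $(\nu,\rho)$ by a depth-shifting (ancestor) argument. The only input I would use is the defining inequality of Definition~\ref{def:dimension}: for every integer $h'\ge 0$,
\[
\mathcal{N}_{h'}\big(2\nu^*(\rho^*)^{h'}\big)\le C(\nu^*,\rho^*)\,(\rho^*)^{-d(\nu^*,\rho^*)h'}.
\]
The subtlety is that $\mathcal{N}_h(\cdot)$ counts cells \emph{at the fixed depth} $h$ and is monotone increasing in its radius, so simply enlarging the radius from $2\nu^*(\rho^*)^h$ to the strictly larger $2\nu\rho^h$ (larger because $\nu>\nu^*$ and $\rho>\rho^*$) goes the wrong way. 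Instead the bound must be transported from a \emph{shallower} depth, where the smaller radius $2\nu^*(\rho^*)^{h^-}$ can dominate $2\nu\rho^h$.

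First I would fix the transfer depth $h^-$ as the largest integer satisfying $2\nu^*(\rho^*)^{h^-}\ge 2\nu\rho^h$; taking logarithms (and remembering $\log\rho^*<0$) yields $h^-=\big\lfloor (\log(\nu/\nu^*)+h\log\rho)/\log\rho^*\big\rfloor$, and one checks $0\le h^-\le h$ precisely in the regime $h\ge h_{min}$. The reason this choice works is a supremum-monotonicity observation: every depth-$h$ cell that is $2\nu\rho^h$-optimal is contained in a unique depth-$h^-$ ancestor, and since the supremum of $f$ over a superset can only be larger, that ancestor satisfies $\sup f\ge f^*-2\nu\rho^h\ge f^*-2\nu^*(\rho^*)^{h^-}$ by the defining property of $h^-$. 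Hence the ancestor is $2\nu^*(\rho^*)^{h^-}$-optimal, so the \emph{distinct} depth-$h^-$ ancestors are counted by $\mathcal{N}_{h^-}(2\nu^*(\rho^*)^{h^-})$.

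Next I would combine this with the binary branching factor. A node at depth $h^-$ has exactly $2^{h-h^-}$ descendants at depth $h$, so grouping the $2\nu\rho^h$-optimal depth-$h$ cells by their depth-$h^-$ ancestor gives
\[
\mathcal{N}_h(2\nu\rho^h)\le 2^{h-h^-}\,\mathcal{N}_{h^-}\big(2\nu^*(\rho^*)^{h^-}\big)\le 2^{h-h^-}\,C(\nu^*,\rho^*)\,(\rho^*)^{-d(\nu^*,\rho^*)h^-}.
\]
The rest is a base change. Writing $a=\log\rho/\log\rho^*\in(0,1)$ and substituting $h^-\approx ah+\log(\nu/\nu^*)/\log\rho^*$, the factor $(\rho^*)^{-d(\nu^*,\rho^*)\,ah}$ collapses to $\rho^{-d(\nu^*,\rho^*)h}$, while $2^{(1-a)h}$ rewrites as $\rho^{-h\,\log 2\,(1/\log(1/\rho)-1/\log(1/\rho^*))}$; together these reproduce the exponent $d(\nu^*,\rho^*)+\log 2\,(1/\log(1/\rho)-1/\log(1/\rho^*))$. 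The remaining $h$-independent pieces (the floor slack and the powers involving $\nu/\nu^*$ and $\rho^*$) assemble into the stated multiplicative constant $C(\nu^*,\rho^*)\,2^{(\log\rho^*+\log\nu^*-\log\nu)/\log\rho}$.

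Finally I would dispatch the small-depth regime $h<h_{min}$, where the above $h^-$ is forced below zero: there I simply take the trivial bound $\mathcal{N}_h(2\nu\rho^h)\le 2^h$ (total number of cells at depth $h$), which is absorbed by $2^{h_{min}}\,\rho^{-h[\cdots]}$; this is exactly the source of the maximum with $2^{h_{min}}$ in the statement. I expect the main obstacle to be purely bookkeeping rather than conceptual: keeping the inequality directions correct under the sign-changing logarithms $\log\rho,\log\rho^*<0$, and controlling the floor in $h^-$ carefully enough that the two regimes glue into a single clean constant of the stated form.
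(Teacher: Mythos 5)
Your proof is correct and is essentially the argument this paper relies on: the paper's own ``proof'' of Lemma~\ref{lem:grill} is a one-line citation to Appendix B.1 of Grill et al.\ \cite{grill2015black}, and your ancestor-transfer argument --- choosing the shallower depth $h^-=\lfloor(\log(\nu/\nu^*)+h\log\rho)/\log\rho^*\rfloor$ so that $2\nu^*(\rho^*)^{h^-}\ge 2\nu\rho^h$, bounding $\mathcal{N}_h(2\nu\rho^h)\le 2^{h-h^-}\mathcal{N}_{h^-}\bigl(2\nu^*(\rho^*)^{h^-}\bigr)$, and then changing base via $(\rho^*)^{a}=\rho$ --- is exactly that analysis, including the trivial $2^h$ bound for $h<h_{min}$ that produces the max in the statement. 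Two harmless discrepancies: the paper's $h_{min}=\log(\nu/\nu^*)\log(1/\rho)$ is evidently a typo for the quotient $\log(\nu/\nu^*)/\log(1/\rho)$, which is the threshold you correctly use; and your bookkeeping yields the constant $C(\nu^*,\rho^*)\,2^{1+\log(\nu/\nu^*)/\log(1/\rho^*)}$, which is at most the stated $C(\nu^*,\rho^*)\,2^{(\log\rho^*+\log\nu^*-\log\nu)/\log\rho}$ because $\log(1/\rho)\le\log(1/\rho^*)$, so the lemma as stated follows a fortiori.
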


\begin{proof}
	It follows directly from the analysis of Theorem~1 in appendix B.1 of~\cite{grill2015black}. 
\end{proof}

Lemma~\ref{lem:grill} implies the following,
\begin{align*}
C(\nu,\rho) &\leq \max \left( C(\nu^*,\rho^*) 2 ^{(\log \rho ^* + \log (\nu ^*/\nu) )/\log \rho} , 2^{h_{min}} \right) \\
d(\nu,\rho) &\leq d(\nu^*,\rho^*) + \log 2 ( 1/\log(1/\rho) - 1/\log(1/\rho ^*) ) \numberthis \label{eq:relations}
\end{align*}

\begin{proof}[Proof of Theorem~\ref{thm:mfpoo}]
Let $S(\Lambda)_{(\nu,\rho)}$ the simple regret of an MFHOO instance run with parameters $(\nu,\rho)$ satisfying the condition in Theorem~\ref{thm:mfpoo}, given a budget of $\Lambda$. Then from the proof of Theorem~\ref{thm:mfpoo} we have the following chain,
\begin{align*}
& \log S(\Lambda)_{(\nu,\rho)} \leq \log \tau + \frac{\log C(\nu,\rho)}{2 + d(\nu,\rho)} - \frac{\log (n(\Lambda) /(\sigma^2\log n(\Lambda)))}{2 + d(\nu,\rho)} \\
& \leq \log \tau + \frac{\log C(\nu,\rho)}{2 + d(\nu,\rho)} 
- \frac{\log (n(\Lambda) /(\sigma^2\log n(\Lambda)))}{2 + d(\nu^*,\rho^*)} \left(1 - \frac{ d(\nu,\rho)- d(\nu^*,\rho^*)}{2 + d(\nu^*,\rho^*)} \right).
\end{align*}  
The last inequality follows from Eq.~\eqref{eq:relations}. Recall that MFPOO spawns $N$ (defined in Algorithm~\ref{algo:robust_algo}) MFHOO instances each with budget $\Lambda/N$. By Equation.~\eqref{eq:relations}, we have that out of the $N$ parameters $\rho_1,...,\rho_N$, there is at least one say $\bar{\rho} \geq \rho^*$ such that,
\begin{align*}
d(\nu_{max},\bar{\rho}) - d(\nu^*,\rho^*) \leq \frac{D_{max}}{N}. 
\end{align*}
Thus we have that,
\begin{align*}
&\log S(\Lambda/N)_{(\nu_{max},\bar{\rho})} \leq \log \tau + \frac{\log C(\nu_{max},\bar{\rho})}{2 + d(\nu,\bar{\rho})} 
+ \log \left( \frac{\log n(\Lambda/N)}{n(\Lambda/N)}\right) \left(\frac{1}{d(\nu^*,\rho^*)} - \frac{D_{max}/N}{(2 + d(\nu^*,\rho^*))^2} \right). \numberthis \label{eq:b1}
\end{align*}
Using Equation~\ref{eq:relations} and following the steps in B.3 on page 12 in~\cite{grill2015black} it can be shown that,
\begin{align}
\frac{\log C(\nu_{max},\bar{\rho})}{2 + d(\nu,\bar{\rho})} \leq \beta + \frac{D_{max}}{2 + d(\nu^*,\rho^*)} \log (\nu_{max}/\nu^*) \label{eq:b3}
\end{align}
Finally using the fact that $N = 0.5D_{max} \log (\Lambda/\log \Lambda)$ and that $n(\Lambda) \leq \Lambda$ we have the following:
\begin{align}
-\log \left( \frac{\log n(\Lambda/N)}{n(\Lambda/N)}\right) \frac{D_{max}/N}{(2 + d(\nu^*,\rho^*))^2} \leq 2. \label{eq:b4}
\end{align}
Putting together Equation~\eqref{eq:b4},\eqref{eq:b3} and \eqref{eq:b1} we have the following,
\begin{align*}
&S(\Lambda/N)_{(\nu_{max},\bar{\rho})} \leq \tau \exp(\beta + 2) \left( D_{max} (\nu_{max}/\nu^*)^{D_{max}} \right. 
 \left.\log n(\Lambda/\log(\Lambda)) / (n(\Lambda/\log(\Lambda))) \right)^{1/(2 + d(\nu^*,\rho^*))}. 
\end{align*}
This proves that at least one of the MFHOO instances spawned has the regret specified in Theorem~\ref{thm:mfpoo}. 
\end{proof}

\section{More on Experiments}
\label{sec:moreexp}

\subsection{Implementation Details}
\label{sec:implement}

In this section, we provide the following implementation details about our algorithm:

{\bf Updating the Bias Function: } As mentioned before, we assume that the bias function if of the form $\zeta(z) = c(1 -z)$. The parameter $c$ is estimated online as follows: (i) We start by choosing a random point $x \in \mathcal{X}$, which is queried at $z_1 = 0.8$ and $z_2 = 0.2$, giving observations $Y_1$ and $Y_2$. We initialize $c = 2|Y_1 - Y_2|/|z_1 - z_2|$. We also set $\nu_{max} = 2*c$. Note that this uses up a small portion of the budget ($\lambda(0.2)+\lambda(0.8)$). The structure of MFPOO is such that while running the parallel MFPOO instances the sames cells (representative point in the cell) is queried again at different fidelities say $z_1$ and $z_2$, yielding function values $Y_1$ and $Y_2$. If at any point $|Y_1 - Y_2|/|z_1 - z_2| > c$, we update $c \leftarrow 2c$. 

{\bf Saving on Parallel MFHOO's: } In practice we can save significant portions of the cost budget by making use of the fact that two MFHOO instances can query the same cell at fidelities $z_1$ and $z_2$ which are very close to each other. We set of tolerance $\tau = 0.01$. If an MFHOO (spawned by MFPOO) queries a cell at a fidelity $z_2$, and that cell had already been queried before at $z_1$, then we reuse the previous evaluation if $|z_1 - z_2| < \tau$. This provides significant gains in practice.

{\bf Hierarchical Partitions: } The hierarchical partitioning scheme followed is similar to that of the DIRECT algorithm~\cite{finkel2003direct}. Each time when a cell needs to be broken into children cells, the coordinate direction in which the cell width is maximum is selected, and the children are divided into halves in the direction of that coordinate. 

{\bf Real-Data Implementations: } Our tree-search implementations are python objects that can take in as input a wrapper class which converts a classification/regression problem into a black-box function object with multiple fidelities. We implement our regressors and classifiers (scikit-learn XGB) within the black-box function objects. We use a 16 Core machine, where XGBoost can be run on parallel threads. We set $nthreads = 5$ and the $5$-Fold cross-validation is also performed in parallel.  

\subsection{Description of Synthetic Functions}
\label{sec:appSynthetic}

We use multi-fidelity versions of commonly used benchmark functions in the black-box optimization literature. These multi-fidelity versions have been previously used in~\cite{kandasamy2016mfgpucb,sen2018multi}.

	\textbf{Currin exponential function \cite{currin1988bayesian}:}
	This is a two dimensional function with domain $[0,1]^2$. The cost function is $\lambda(z) = 0.1 + z^2$ and the noise variance is $\sigma^2 = 0.5$. The multi-fidelity object as a function of $(x,z)$ is, 
	\begin{align*}
	f_z(x) &= \left(1-0.1(1-z)\exp\left(\frac{-1}{2x_2}\right)\right) \times 
	\left(\frac{2300x_1^3 + 1900x_1^2 + 2092x_1 + 60}{100x_1^3 + 
		500x_1^2 + 4x_1 + 20}\right).
	\end{align*}

	\textbf{Hartmann functions \cite{szego1978towards}:}
%	We used 
%	$f_z(x) = \sum_{i=1}^4 (\alpha_i - \alpha'(z)) \exp\big( -\sum_{j=1}^3 A_{ij}
%	(x_j-P_{ij})^2\big)$.
%	Here $A, P$ are given below for the $3$ and $6$ dimensional cases and
%	$\alpha = [1.0, 1.2, 3.0, 3.2]$.
%	Then $\alpha'$ was set as $\alpha'(z) = 0.1(1 - z)$.
%	We constructed the $p=4$ and $p=2$ Hartmann functions for the $3$ and $6$ dimensional
%	cases respectively this way.
%	When $z = 1$, this reduces to the usual Hartmann function commonly used
%	as a benchmark in global optimisation.
	% The $M$\ssth fidelity function is
	% $\funcM(x) = \sum_{i=1}^4 \alpha_i \exp\big( -\sum_{j=1}^3 A_{ij}
	% (x_j-P_{ij})^2\big)$ where $A, P\in\RR^{4\times 3}$ are fixed matrices given below and
	% $\alpha = [1.0, 1.2, 3.0, 3.2]$. For the lower fidelities we use the same form except
	% change $\alpha$ to $\alpha^{(m)} = \alpha + (M-m)\delta$ where $\delta = [0.01, -0.01,
	% -0.1, 0.1]$ and $M=3$. The domain is $\Xcal = [0,1]^3$.
	We use two Hartmann functions in $3$ and $6$ dimensions. The functional form of the multi-fidelity object is 	$f_z(x) = \sum_{i=1}^4 (\alpha_i - \alpha'(z)) \exp\big( -\sum_{j=1}^3 A_{ij}(x_j-P_{ij})^2\big)$ where $\alpha = [1.0, 1.2, 3.0, 3.2]$ and $\alpha'(z) = 0.1(1-z)$.
	In the case of $3$ dimensions, the cost function is $\lambda(z) = 0.05 + (1 - 0.05)z^3$,
	$\sigma^2=0.01$ and,
	\[
	A = 
	\begin{bmatrix}
	3 & 10 & 30 \\
	0.1 & 10 & 35 \\
	3 & 10 & 30 \\
	0.1 & 10 & 35 
	\end{bmatrix},
	\quad
	P = 10^{-4} \times
	\begin{bmatrix}
	3689 & 1170 & 2673 \\
	4699 & 4387 & 7470 \\
	1091 & 8732 & 5547 \\
	381 & 5743 & 8828
	\end{bmatrix}.
	\]
	Moving to the $6$ dimensional case, the cost function is $\lambda(z) = 0.05 + (1 -
	0.05)z^3$, $\sigma^2=0.05$ and,
	\[
	A = 
	\begin{bmatrix}
	10 & 3 & 17 & 3.5 & 1.7 & 8 \\
	0.05 & 10 & 17 & 0.1 & 8 & 14 \\
	3 & 3.5 & 1.7 & 10 & 17 & 8 \\
	17 & 8 & 0.05 & 10 & 0.1 & 14 
	\end{bmatrix},
	\;\;
	P = 10^{-4} \times
	\begin{bmatrix}
	1312 & 1696 & 5569 &  124 & 8283 & 5886 \\
	2329 & 4135 & 8307 & 3736 & 1004 & 9991 \\
	2348 & 1451 & 3522 & 2883 & 3047 & 6650 \\
	4047 & 8828 & 8732 & 5743 & 1091 &  381 \\
	\end{bmatrix}.
	\].
	
	When $z = 1$, these functions reduce to the commonly used Hartmann benchmark functions.

	\textbf{Branin function \cite{szego1978towards}:}
	For this function the domain is $\Xcal=[[-5, 10], [0, 15]]^2$. The multi-fidelity object is given by,
	\[
	f_z(x) = a(x_2 - b(z)x_1^2 + c(z)x_1 - r)^2 + s(1-t(z))\cos(x_1) + s,
	\]
	where $a = 1$, $b(z)=5.1/(4\pi^2) - 0.01(1-z)$
	$c(z) = 5/\pi - 0.1(1-z)$, $r=6$, $s=10$ and $t(z)=1/(8\pi) + 0.05(1-z)$.
	At $z = 1$, this becomes the standard Branin function.
	The cost function is $\lambda(z) = 0.05 + z^{3}$ and
	$\sigma^2=0.05$ is the noise variance.

\end{document}